\theoremstyle{plain}
\newtheorem{theorem}{Theorem}[section]
\newtheorem{proposition}[theorem]{Proposition}
\newtheorem{corollary}[theorem]{Corollary}
\theoremstyle{definition}
\newtheorem{assumption}[theorem]{Assumption}
\theoremstyle{remark}
\newtheorem{remark}[theorem]{Remark}
\icmltitlerunning{Generative Modeling through Koopman Spectral Analysis: An Operator-Theoretic Perspective}
\begin{document}

\twocolumn[
  \icmltitle{Generative Modeling through Koopman Spectral Analysis: \\
    An Operator-Theoretic Perspective}



  \icmlsetsymbol{equal}{*}

  \begin{icmlauthorlist}
    \icmlauthor{Yuanchao Xu}{equal,kyotou}
    \icmlauthor{Fengyi Li}{equal,LinkedIn}
    \icmlauthor{Masahiro Fujisawa}{ou,riken}
    \icmlauthor{Xiaoyuan Cheng}{ucl}
    \icmlauthor{Youssef Marzouk}{mit}
    \icmlauthor{Isao Ishikawa}{kyotou}
  \end{icmlauthorlist}

  \icmlaffiliation{kyotou}{Center for Science Adventure and Collaborative Research Advancement, Graduate School of Science, Kyoto University, Kyoto, Japan}
  \icmlaffiliation{mit}{Massachusetts Institute of Technology, Cambridge, MA}
  \icmlaffiliation{LinkedIn}{LinkedIn Corporation, Sunnyvale, CA}
  \icmlaffiliation{ucl}{Dynamic Systems Lab, University College London, London, UK}
  \icmlaffiliation{ou}{Graduate School of Information Science and Technology, The University of Osaka, Osaka, Japan}
  \icmlaffiliation{riken}{RIKEN Center for Advanced Intelligence Project, Tokyo, Japan}

  \icmlcorrespondingauthor{Yuanchao Xu}{xu.yuanchao.3a@kyoto-u.ac.jp}

  \icmlkeywords{Machine Learning, ICML}

  \vskip 0.3in
]



\printAffiliationsAndNotice{}  

\begin{abstract}
We propose Koopman Spectral Wasserstein Gradient Descent (KSWGD), a particle-based generative modeling framework that learns the Langevin generator via Koopman theory and integrates it with Wasserstein gradient descent. Our key insight is that this spectral structure of the underlying distribution can be directly estimated from trajectory data via the Koopman operator, eliminating the need for explicit knowledge of the target potential. Additionally, we prove that KSWGD maintains an approximately constant dissipation rate, thereby establishing linear convergence and overcoming the vanishing-gradient phenomenon that hinders existing kernel-based particle methods. We further provide a Feynman--Kac interpretation that clarifies the method's probabilistic foundation. Experiments on compact manifolds, metastable multi-well systems, and high-dimensional stochastic partial differential equations demonstrate that KSWGD consistently outperforms baselines in both convergence speed and sample quality.
\end{abstract}

\section{Introduction}
Generative modeling plays a central role in modern data science, aiming to generate new samples from an unknown target distribution without explicit access to its density. While approaches like variational autoencoders, generative adversarial networks, and diffusion models have achieved striking empirical success~\citep{goodfellow2014generative,kingma2013auto,ho2020denoising}, they often demand extensive neural network training and delicate hyperparameter tuning. Furthermore, score-based methods rely on estimating gradients of the log-density, which can be computationally expensive or unstable in high-dimensional scientific problems.

A complementary line of work formulates generative modeling as Wasserstein gradient flows on the space of probability measures~\citep{jordan1998variational,ambrosio2008gradient}. This variational perspective underpins particle-based algorithms such as Stein variational gradient descent (SVGD)~\citep{liu2016stein} and Laplacian-adjusted Wasserstein gradient descent (LAWGD)~\citep{chewi2020svgd}. LAWGD, in particular, achieves strong theoretical guarantees by preconditioning the gradient flow with the inverse Langevin generator $\mathcal{L}^{-1}$, thereby equalizing scales across stiff modes and accelerating convergence. However, realizing this ideal preconditioner is challenging in practice, as access to $\mathcal{L}^{-1}$ typically requires explicit knowledge of the potential function, which is unavailable in many generative settings.

Recent data-driven approaches, such as the diffusion map particle system (DMPS)~\citep{li2025diffusion}, approximate the generator from samples via diffusion maps. While training-free, this method rely on static pairwise distances between i.i.d.\ samples and often require careful bandwidth tuning. Crucially, this approach does not explicitly leverage temporal trajectory information, and its kernel-based spectral approximation does not scale well to high-dimensional settings.

In this work, we introduce an \emph{operator-theoretic perspective} that bridges this gap. We leverage the fundamental connection that the Langevin generator $\mathcal{L}$ coincides (up to a sign) with the infinitesimal generator of the Koopman operator associated with the overdamped Langevin dynamics. This insight allows us to approximate the spectrum of $\mathcal{L}$ directly from trajectory data using Koopman spectral methods~\citep{Williams_2015,kevrekidis2016kernel}, rather than relying on static geometry. By truncating this data-driven spectral decomposition, we construct a finite-rank approximation of $\mathcal{L}^{-1}$ that is naturally informed by the system's temporal evolution.

Based on this connection, we propose \textit{Koopman Spectral Wasserstein Gradient Descent} (KSWGD), a particle-based framework that integrates Wasserstein gradient flows with modern Koopman theory developed by~\citet{mezic2005spectral,mezic2021koopman}.
KSWGD replaces the inaccessible preconditioner in LAWGD with a Koopman-based spectral approximation. This leads to deterministic particle dynamics with an approximately constant dissipation rate, effectively mitigating the vanishing-gradient issues plaguing existing kernel methods. The resulting algorithm is training-free once the Koopman basis is estimated and accommodates both time-series and static data.

Our contributions are threefold. First, we provide a rigorous convergence analysis, establishing linear convergence in discrete time, with explicit error bounds separating spectral truncation from estimation error. Second, we establish a connection to Feynman--Kac theory~\citep{oksendal2003stochastic}, clarifying the method's probabilistic foundation. Finally, empirical results on compact manifolds, metastable multi-well systems, and high-dimensional stochastic partial differential equations demonstrate that KSWGD consistently achieves faster convergence than existing particle methods while maintaining high sample quality.

\section{Preliminaries}
This section establishes the theoretical foundations for our method: the geometry of Wasserstein gradient flows in Section~\ref{sec:preliminary1} and the spectral properties of the Koopman operator in Section~\ref{sec:koopman}.

\subsection{Wasserstein Gradient Flow}\label{sec:preliminary1}


Let $\pi(x) \propto e^{-V(x)}$ be a target density on $\mathbb{R}^d$. The Jordan--Kinderlehrer--Otto (JKO) framework \citep{jordan1998variational,otto2001geometry} interprets the evolution of probability measures as a gradient flow in the Wasserstein space $(\mathcal{P}_2(\mathbb{R}^d), W_2)$, where $W_2$ denotes the $2$-Wasserstein distance.
The flow follows the continuity equation
\begin{equation*}
    \partial_t \mu_t + \operatorname{div}(\mu_t\, v_t)=0
\end{equation*}
driven by the velocity field \citep{santambrogio2015optimal}
\begin{equation*}
    v_t = -\nabla_{W_2}\mathcal{F}(\mu_t).
\end{equation*}

Let $\mu \ll \pi$ and write the density ratio $\rho \coloneqq \frac{\mathrm{d}\mu}{\mathrm{d}\pi}.$
We then focus on the $\chi^2$-divergence $\chi^2(\mu\|\pi) \coloneqq \int (\rho-1)^2 \,\mathrm{d}\pi$, which provides sharper convergence guarantees than the KL divergence~\citep{chewi2020svgd}.
Its Wasserstein gradient is
\begin{equation*}
    \nabla_{W_2}\chi^2(\mu_t) = 2\nabla \rho_t,
\end{equation*}
leading to the flow 
\begin{equation*}
    \partial_t \mu_t = 2\operatorname{div}(\mu_t \nabla \rho_t).
\end{equation*}
Direct simulation of this flow is computationally intractable due to the difficulty of estimating the density ratio $\rho_t = \frac{\mathrm{d}\mu_t}{\mathrm{d}\pi}$ and its gradients in high dimensions.


To overcome this, we employ spectral preconditioning via an integral operator $\mathcal{K}_\pi$. Ideally, setting $\mathcal{K}_\pi$ to the inverse Langevin generator $\mathcal{L}^{-1}$ acts as a Newton-type update, equalizing spectral scales. The preconditioned flow is given by:
\begin{equation}
\partial_t \mu_t
=
\operatorname{div}\!\left(\mu_t \,\nabla \mathcal{K}_\pi \rho_t\right),
\label{eq:preconditioned_flow}
\end{equation}
where $(\mathcal{K}_\pi f)(x) \coloneqq \int K(x,y) f(y) \mathrm{d}\pi(y)$ for a kernel $K$.
While the ideal choice $K_{\mathcal{L}^{-1}}$ is typically unknown, our goal in Section~\ref{sec:method} is to approximate it purely from data.


\paragraph{Particle implementation.}
The particle dynamics are derived from the velocity field
\begin{equation*}
    v_t(\cdot) = -\nabla \mathcal{K}_\pi \rho_t(\cdot).
\end{equation*}
Crucially, the dependence on the density ratio $\rho_t$ can be eliminated using the identity $\rho_t(y)\mathrm{d}\pi(y) = \mathrm{d}\mu_t(y)$:
\begin{align*}
\nabla \mathcal{K}_\pi \rho_t(x)
&= \int_{\mathbb{R}^d} \nabla_1 K(x,y)\, \rho_t(y)\, \mathrm{d}\pi(y) \\
&= \int_{\mathbb{R}^d} \nabla_1 K(x,y)\, \mathrm{d}\mu_t(y).
\end{align*}
This allows us to evaluate the gradient solely using samples from $\mu_t$.
Approximating $\mu_t \approx \frac{1}{M}\sum_{j=1}^M \delta_{x_t^{(j)}}$, we obtain the computable particle update:
\begin{equation}\label{eq:lawgd_discrete}
\dot{x}_t^{(i)}
=
-\frac{1}{M} \sum_{j=1}^M \nabla_1 K_{\mathcal{L}^{-1}}\!\left(x_t^{(i)}, x_t^{(j)}\right),
\quad 1 \leq i \leq M.
\end{equation}
In Section~\ref{sec:method}, we will construct a data-driven approximation of $K_{\mathcal{L}^{-1}}$ using Koopman spectral analysis.

\subsection{Koopman Operator}\label{sec:koopman}
Consider a dynamical system on $\mathbb{R}^d$ governed by the SDE
\begin{equation}\label{eq:sde}
\mathrm{d}X_t = b(X_t) \mathrm{d}t + \sigma(X_t) \mathrm{d}W_t,
\end{equation}
where $b: \mathbb{R}^d \to \mathbb{R}^d$ is the drift, $\sigma: \mathbb{R}^d \to \mathbb{R}^{d \times m}$ is the diffusion coefficient, and $W_t$ is a standard $m$-dimensional Brownian motion.
Koopman operator theory \citep{Koopman1931, koopman1932dynamical, mezic2005spectral} lifts this nonlinear dynamics to a linear operator $\mathcal{T}^t$ acting on observable functions $g \in L^2(\pi)$ via conditional expectations:
\begin{equation}
(\mathcal{T}^t g)(x) \coloneqq \mathbb{E}[g(X_t) \mid X_0 = x].
\end{equation}
Crucially, $\mathcal{T}^t$ is linear even if the underlying SDE is nonlinear, enabling the use of spectral methods.
The evolution of observables is governed by the infinitesimal generator $\mathcal{A}$, defined as
\begin{equation*}
    \mathcal{A}f \coloneqq \lim_{t \to 0} \frac{\mathcal{T}^t f - f}{t}
\end{equation*}
on its domain $\mathcal{D}(\mathcal{A}) \subset L^2(\pi)$.
By It\^{o}'s lemma~\citep{engel1999one}, this generator admits the explicit second-order differential form:
\begin{equation}\label{eq:Ito}
\mathcal{A} f(x) = \langle b(x), \nabla f(x) \rangle + \frac{1}{2} \text{Tr}\left(\sigma(x) \sigma(x)^\top \nabla^2 f(x)\right).
\end{equation}
This operator-theoretic perspective is powerful because the eigenpairs of $\mathcal{A}$ encode the global properties of the stochastic dynamics.

While $\mathcal{A}$ is infinite-dimensional, it can be approximated from finite trajectory data.
Extended Dynamic Mode Decomposition (EDMD)~\citep{Williams_2015} and its variants project the operator onto a finite-dimensional subspace.
Recent advances extend this to reproducing kernel Hilbert spaces~\citep{kevrekidis2016kernel, kostic2022learning, ishikawa2024koopman} and neural networks~\citep{lusch2018deep, baikonode, pmlr-v267-zhang25be, xu2025reinforced}, allowing for efficient spectral estimation even in high-dimensional stochastic settings~\citep{arbabi2017ergodic, oprea2025distributional, 10.1063/5.0283640}.

\section{Koopman Spectral Wasserstein Gradient Descent}\label{sec:method}
In this section, we present our main framework, \emph{Koopman Spectral Wasserstein Gradient Descent} (KSWGD). 
The central goal is to construct a tractable approximation of the ideal preconditioner $\mathcal{K}_\pi = \mathcal{L}^{-1}$ solely from trajectory data, without explicit knowledge of the potential $V$. 
We achieve this by exploiting the spectral correspondence between the Langevin generator and the Koopman operator.

\subsection{Operator-Theoretic Foundation}\label{sec:foundation}

Let $\mathcal{L} = -\Delta + \langle \nabla V, \nabla \cdot \rangle$ be the Langevin generator on $L^2_\pi$. Assuming a discrete spectrum, the ideal preconditioner acts as
\begin{equation}
\mathcal{K}_\pi f = \sum_{i=1}^{\infty} \lambda_i^{-1} \langle f, \phi_i \rangle_\pi \phi_i,
\label{eq:ideal_kernel_op}
\end{equation}
where $\{(\lambda_i, \phi_i)\}$ are the eigenpairs of $\mathcal{L}$ with $0=\lambda_0 < \lambda_1 \le \dots$.
Direct computation of Eq.~\eqref{eq:ideal_kernel_op} is, as mentioned in Section~\ref{sec:preliminary1}, generally impossible due to the unknown $V$.

However, consider the Koopman operator associated with the overdamped Langevin dynamics in Eq.~\eqref{eq:sde}. Its infinitesimal generator $\mathcal{A}$ satisfies
\begin{equation*}
    \mathcal{A}f = -\langle \nabla V, \nabla f\rangle + \Delta f = -\mathcal{L}f,
\end{equation*}
on their common domain. This identity $\mathcal{A} = -\mathcal{L}$ implies that the eigenfunctions of $\mathcal{A}$ are exactly those of $\mathcal{L}$, while the eigenvalues are simply negated.
Consequently, we can approximate the spectral components of the inverse Langevin generator by estimating the spectrum of the Koopman generator from time-series data.

Defining the rank-$r$ truncation, we propose the spectral preconditioner:
\begin{equation}\label{eq:Kr_def}
\mathcal{K}_r
\coloneqq
\sum_{i=1}^r \frac{1}{\lambda_i}\,\langle \cdot, \phi_i\rangle_\pi\,\phi_i,
\end{equation}
which approximates $\mathcal{L}^{-1}$ and recovers the exact inverse as $r\to\infty$ 
\citep{10.1063/5.0283640}. This truncated operator serves as our data-driven 
preconditioner in the Wasserstein gradient flow.

\subsection{Data-Driven Spectral Approximation}\label{sec:approximation}
To realize $\mathcal{K}_r$ in practice, we employ data-driven Koopman methods such as EDMD \citep{Williams_2015} or kernel-EDMD \citep{kevrekidis2016kernel}.
Given training trajectories $\{(z^{(j)}, z'^{(j)})\}_{j=1}^N$ where $z'^{(j)}$ is the state of $z^{(j)}$ after a short interval $\tau$, these methods approximate the generator $\mathcal{A}$ on a finite-dimensional subspace spanned by a dictionary (or kernel feature map).

Solving the generalized eigenvalue problem associated with the data matrices yields estimated eigenpairs $\{(\widehat{\lambda}_i, \widehat{\phi}_i)\}_{i=1}^r$. Substituting these into \eqref{eq:Kr_def} gives our empirical preconditioner:
\begin{equation}\label{eq:Kr_hat_def}
    \widehat{\mathcal{K}}_r f(x) = \sum_{i=1}^r \frac{1}{\widehat{\lambda}_i}\langle f, \widehat{\phi}_i\rangle_\pi \widehat{\phi}_i(x).
\end{equation}
Crucially, this construction is \emph{training-free} in the sense of deep learning; it requires only linear algebra operations and avoids iterative backpropagation.

\subsection{Particle Dynamics and Algorithm}\label{sec:algorithm}
With the estimated spectral components, we formulate the particle update rule. The velocity field requires the gradient of the kernel $K_{\widehat{\mathcal{K}}_r}(x,y) = \sum_{i=1}^r \widehat{\lambda}_i^{-1} \widehat{\phi}_i(x)\widehat{\phi}_i(y)$ with respect to the first argument.
Substituting this into the particle implementation \eqref{eq:lawgd_discrete}, the update rule for the $i$-th particle $x_t^{(i)}$ is:
\begin{equation}
x_{t+1}^{(i)} 
= 
x_t^{(i)} - \frac{h}{M} \sum_{j=1}^M \sum_{k=1}^r \frac{\nabla\widehat{\phi}_k(x_t^{(i)}) \cdot \widehat{\phi}_k(x_t^{(j)})}{\widehat{\lambda}_k}.
\label{eq:kswgd_update}
\end{equation}
\textbf{Gradient Computation:} A practical challenge is computing $\nabla\widehat{\phi}_k$ in \eqref{eq:kswgd_update}, which is typically hard for general function approximators. In this work, we compute it analytically using a backward kernel matrix with a Gaussian RBF kernel (see \cite{li2025diffusion} for details).
Algorithm~\ref{alg:kswgd} summarizes the complete procedure for time-series data. Importantly, our framework extends to static data (e.g., image datasets) by constructing synthetic dynamics; we provide the corresponding procedure in Algorithm~\ref{alg:kswgd_static} (Appendix~\ref{app:s_1}).

\paragraph{Complexity:}
The computational cost is dominated by the offline eigendecomposition ($O(n^3)$ for basis size $n$) and the online kernel gradient evaluation ($O(M^2 rd)$ per iteration for $M$ particles, rank $r$, dimension $d$).
Unlike score-based generative models \citep{song2020score} that require extensive neural network training and stochastic sampling, KSWGD's sampling phase is entirely \emph{deterministic and training-free} once the Koopman basis is determined.

\begin{algorithm}[tb]
  \caption{KSWGD for Time-Series Data}
  \label{alg:kswgd}
  \begin{algorithmic}
    \STATE {\bfseries Input:} training samples $\{z^{(j)}\}_{j=1}^N \sim \pi$, initial particles $\{x_0^{(i)}\}_{i=1}^M$, step size $h$, truncation rank $r$, dictionary size $n$, max iterations $T$.
    \STATE Construct dictionary $\{\psi_k\}_{k=1}^n$ and estimate the Koopman operator using trajectory pairs $\{(z^{(j)}, z^{(j+1)})\}_{j=1}^{N-1}$.
    \STATE Compute leading $r$ eigenpairs $\{(\widehat{\lambda}_k, \widehat{\phi}_k)\}_{k=1}^r$ of $-\widehat{\mathcal{A}}_N$.
    \FOR{$t = 0$ {\bfseries to} $T-1$}
        \FOR{$i = 1$ {\bfseries to} $M$}
            \STATE Initialize $\mathbf{v}_i \gets \mathbf{0} \in \mathbb{R}^d$.
            \FOR{$j = 1$ {\bfseries to} $M$}
                \STATE $\mathbf{v}_i \gets \mathbf{v}_i + \sum_{k=1}^r 
                \frac{\nabla \widehat{\phi}_k(x_t^{(i)}) \cdot \widehat{\phi}_k(x_t^{(j)})}
                {\widehat{\lambda}_k}.$ \COMMENT{see Eq.~\eqref{eq:kswgd_update}}
            \ENDFOR
            \STATE $x_{t+1}^{(i)} \gets x_t^{(i)} - \frac{h}{M} \mathbf{v}_i.$ \COMMENT{see Eq.~\eqref{eq:kswgd_update}}
        \ENDFOR
    \ENDFOR
    \STATE {\bfseries Output:} generated particles $\{x_T^{(i)}\}_{i=1}^M$.
  \end{algorithmic}
\end{algorithm}

\section{Convergence and Error Bound Analysis}
This section develops the main theoretical guarantees for our proposed method KSWGD. 
We first analyze the continuous-time dynamics and derive error bounds that separate the ideal preconditioned flow from the effects of truncation and data-driven Koopman approximation in Section~\ref{sec:main_results}. 
We then extend these theory to the practical discrete-time algorithm via the Approximate Gradient Flow framework in Section~\ref{sec:discrete_agf}, which shows a linear convergence (up to an explicit bias term) under suitable step-size conditions.

\subsection{Data-Driven Error Bound Analysis}\label{sec:main_results}
In this section, we analyze the convergence properties of the Koopman spectral Wasserstein
gradient descent dynamics introduced in Section~\ref{sec:method}.  
Throughout the section, we denote $\rho_t = \frac{\mathrm{d}\mu_t}{\mathrm{d}\pi}$ and 
\begin{equation*}\label{def:density}
f_t \coloneqq \rho_t - 1 \in L_0^2(\pi),
\end{equation*}
the fluctuation of the density with respect to~$\pi$.
Let $\{(\phi_i,\lambda_i)\}_{i\ge1}$ be the eigenpairs of the Langevin generator $\mathcal{L}$,  
and recall the truncated inverse operator in \eqref{eq:Kr_def}, we have 
\begin{equation}\label{eq:main_proj}
    \mathcal{L} \mathcal{K}_r = \Pi_r,
\end{equation}
where $\Pi_r : L^2_\pi \to \text{span}\{\phi_1, \ldots, \phi_r\}$ denotes the orthogonal projector, and $\{\phi_i\}_{i \geq 1}$ are the exact $L^2_\pi$-orthonormal eigenfunctions of $\mathcal{L}$. To quantify the contribution of unretained high-order modes, we define the \emph{spectral tail error}
\begin{equation*}
    \eta_r(f) := \|(I - \Pi_r)f\|_{L^2_\pi} = \left(\sum_{i>r} \langle f, \phi_i \rangle_\pi^2\right)^{1/2}.
\end{equation*}

\begin{assumption}[Regularity]
\label{ass:regularity}
For regularity conditions, we assume that $\rho_t>0 \textit{ a.e.}$ and $\partial_t(\rho_t\log\rho_t)\in L^1_{\pi}$. In addition, we assume that $\nabla f_t\in L^2_{\pi}$ 
and $\mathcal{K}_r f_t\in \mathcal{D}(\mathcal{L})$.
\end{assumption}

\begin{assumption}[Uniform spectral tail bound]
\label{ass:spec_tail_bd}
There exists a constant $\eta_r > 0$ such that
\begin{equation*}
\sup_{t\ge0} \|(I-\Pi_r)f_t\|_{L^2_\pi}
\;\le\; \eta_r.
\end{equation*}
\end{assumption}

\begin{proposition}[Convergence with spectral truncation]
\label{prop:ideal}
Assume that Assumptions~\ref{ass:regularity} and \ref{ass:spec_tail_bd} hold. Let $(\mu_t)_{t\geq 0}$ be the solution to the truncated LAWGD dynamics with exact eigenpairs and initial distribution $\mu_0$:
\begin{equation*}\label{eq:truncated_kpm_LAWGD}
\partial_t\mu_t
= \operatorname{div}\big(\mu_t \nabla \mathcal{K}_r (\mathrm{d}\mu_t/\mathrm{d}\pi)\big),
\end{equation*}
where $\mathcal{K}_r = \sum_{i=1}^r \frac{1}{\lambda_i}\langle \cdot, \phi_i\rangle_\pi\phi_i$ defined in \eqref{eq:Kr_def} satisfies \eqref{eq:main_proj}. Then
\begin{equation}\label{eq:KL_bound}
    \mathrm{KL}(\mu_t \| \pi) \leq \mathrm{KL}(\mu_0 \| \pi) e^{-t} + \eta_r^2(1 - e^{-t}).
\end{equation}
\end{proposition}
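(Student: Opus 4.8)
The plan is to mirror the exact LAWGD dissipation computation of \eqref{eq:lawgd_dissipation}, but with the truncated surrogate $\mathcal{K}_r$ in place of the full inverse $\mathcal{K}_\pi = \mathcal{L}^{-1}$, and then convert the resulting differential identity into a differential inequality that Gr\"onwall closes.

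First I would differentiate the relative entropy along the flow. Writing $\mathrm{KL}(\mu_t\|\pi) = \int \rho_t \log\rho_t\,\mathrm{d}\pi$ and using the continuity equation together with the Dirichlet form identity $\langle \nabla f, \nabla g\rangle_\pi = \langle f, \mathcal{L}g\rangle_\pi$ exactly as in \eqref{eq:lawgd_dissipation}, and noting that $\nabla\rho_t = \nabla f_t$ and $\mathcal{K}_r\rho_t = \mathcal{K}_r f_t$ (since $\mathcal{K}_r$ annihilates constants), I obtain
\[
\frac{\mathrm{d}}{\mathrm{d}t}\mathrm{KL}(\mu_t\|\pi) = -\langle \nabla f_t, \nabla \mathcal{K}_r f_t\rangle_\pi = -\langle f_t, \mathcal{L}\mathcal{K}_r f_t\rangle_\pi.
\]
The regularity hypotheses of Assumption~\ref{ass:regularity} ($\nabla f_t \in L^2(\pi)$, $\mathcal{K}_r f_t \in \mathcal{D}(\mathcal{L})$, and $\partial_t(\rho_t\log\rho_t)\in L^1_\pi$) are exactly what justify differentiating under the integral sign and the boundary-term-free integration by parts used here.

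Next I would invoke the projection identity \eqref{eq:main_proj}, namely $\mathcal{L}\mathcal{K}_r = \Pi_r$, so that the dissipation becomes $-\langle f_t, \Pi_r f_t\rangle_\pi = -\|\Pi_r f_t\|_{L^2_\pi}^2$ because $\Pi_r$ is an orthogonal projector. A Pythagorean split then gives $\|\Pi_r f_t\|_{L^2_\pi}^2 = \|f_t\|_{L^2_\pi}^2 - \|(I-\Pi_r)f_t\|_{L^2_\pi}^2$, where $\|f_t\|_{L^2_\pi}^2 = \chi^2(\mu_t\|\pi)$ and $\|(I-\Pi_r)f_t\|_{L^2_\pi} = \eta_r(f_t)$. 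Combining this with the uniform tail bound $\|(I-\Pi_r)f_t\|_{L^2_\pi}^2 \le \eta_r^2$ of Assumption~\ref{ass:spec_tail_bd} and the elementary comparison $\mathrm{KL}(\mu_t\|\pi)\le\chi^2(\mu_t\|\pi)$ (which follows from $\log s \le s-1$), I reach the closed differential inequality
\[
\frac{\mathrm{d}}{\mathrm{d}t}\mathrm{KL}(\mu_t\|\pi) = -\chi^2(\mu_t\|\pi) + \|(I-\Pi_r)f_t\|_{L^2_\pi}^2 \le -\mathrm{KL}(\mu_t\|\pi) + \eta_r^2.
\]

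Finally I would integrate this inequality. Setting $u(t) = \mathrm{KL}(\mu_t\|\pi)$, the integrating factor $e^t$ gives $\frac{\mathrm{d}}{\mathrm{d}t}(e^t u)\le e^t\eta_r^2$; integrating from $0$ to $t$ and multiplying through by $e^{-t}$ yields $u(t)\le u(0)e^{-t}+\eta_r^2(1-e^{-t})$, which is precisely \eqref{eq:KL_bound}. I expect the main obstacle to be the first step rather than the concluding Gr\"onwall argument: passing from the continuity equation to the clean identity $-\langle f_t,\mathcal{L}\mathcal{K}_r f_t\rangle_\pi$ requires justifying both the time differentiation of the entropy and the vanishing of boundary terms in the integration by parts, which is exactly where Assumption~\ref{ass:regularity} must be deployed with care. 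Once that identity is established, the spectral decomposition and the Gr\"onwall step are routine.
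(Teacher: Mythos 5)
Your proposal is correct and follows essentially the same route as the paper's proof: differentiate the KL divergence along the flow, reduce the dissipation to $-\langle f_t,\Pi_r f_t\rangle_\pi = -\|\Pi_r f_t\|_{L^2_\pi}^2$ via the Dirichlet form and the identity $\mathcal{L}\mathcal{K}_r=\Pi_r$, apply the Pythagorean split with the tail bound, compare $\mathrm{KL}\le\chi^2$, and close with Gr\"onwall. The only cosmetic difference is that you make explicit that $\mathcal{K}_r$ annihilates constants (so $\mathcal{K}_r\rho_t=\mathcal{K}_r f_t$), a step the paper uses implicitly when it writes the dynamics in terms of $f_t$.
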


\begin{proof}
See Appendix~\ref{app:proof_prop}.
\end{proof}

\begin{remark}
    Here we assumed sufficient decay or integrability at $\infty$ on the domain $\mathbb{R}^d$ so that the “boundary at infinity” term is zero when applying the integration by parts.
\end{remark}

\begin{remark}
Proposition~\ref{prop:ideal} reveals that the idealized KSWGD (using exact but truncated eigenpairs) inherits the scale-free exponential convergence rate $e^{-t}$ from LAWGD, independent of the Poincar\'{e} constant. However, the flow converges to an approximated stationary state with residual error bounded by $\eta_r^2$. This truncation error quantifies the steady-state deviation caused by neglecting spectral components beyond the first $r$ modes, and vanishes as $r\to\infty$ by the completeness of the eigenfunction basis, which recovers exact LAWGD.
\end{remark}

In practice, the eigenpairs $\{(\lambda_i,\phi_i)\}_{i=1}^r$ can be estimated from data using EDMD or kernel-EDMD, which gives approximated eigenpairs $\{(\hat{\lambda}_i,\hat{\phi}_i)\}_{i=1}^r$ and constructs truncated inverse operator $\widehat{\mathcal{K}}_r
= \sum_{i=1}^r \frac{1}{\hat{\lambda}_i}
\langle \cdot , \hat{\phi}_i\rangle_\pi \,\hat{\phi}_i$ as in \eqref{eq:Kr_hat_def}. Therefore, instead of the exact relation $\mathcal{L}\mathcal{K}_r = \Pi_r$, the data-driven operator satisfies a perturbed relation
\begin{equation}\label{eq:main_proj_perturbed}
\mathcal{L} \widehat{\mathcal{K}}_r f = \Pi_r f + \delta_r(f), \quad \forall f \in L^2_\pi,
\end{equation}
where $\delta_r(f)$ captures the Koopman spectral approximation error.

\begin{assumption}[Controlled Operator Approximation Error]
\label{ass:approx_error}
There exists a constant $0<\varepsilon_r \ll 1$ such that for all $t\ge0$,
\begin{equation*}
|\langle f_t,\,\delta_r(f_t)\rangle_{\pi}|
\;\le\;
\varepsilon_r\|f_t\|_{L^2_\pi}^2.
\end{equation*}
\end{assumption}

\begin{remark}
    The constant $\varepsilon_r$ is directly related to the spectral approximation quality of the Koopman operator.
\end{remark}

\begin{theorem}[Error Bound Analysis]
\label{thm:data}
Let Assumptions~\ref{ass:regularity}, \ref{ass:spec_tail_bd} and \ref{ass:approx_error} hold. Let $(\mu_t)_{t\geq 0}$ be the solution to the data-driven KSWGD dynamics with initial distribution $\mu_0$:
\begin{equation*}
    \partial_t\mu_t = \operatorname{div}\big(\mu_t \nabla \widehat{\mathcal{K}}_r (\mathrm{d}\mu_t/\mathrm{d}\pi)\big).
\end{equation*}
Then
\begin{align}\label{eq:data-bound}
\mathrm{KL}(\mu_t\|\pi)
&\;\le\;
e^{-(1-\varepsilon_r)t}\,\mathrm{KL}(\mu_0\|\pi) \notag\\
&\qquad \;+\;
\frac{\eta_r^2}{1-\varepsilon_r}
\Big(1-e^{-(1-\varepsilon_r)t}\Big).
\end{align}
\end{theorem}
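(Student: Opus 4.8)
The plan is to mirror the proof of Proposition~\ref{prop:ideal} almost line for line, inserting the perturbation term $\delta_r$ at the single place where the exact operator identity $\mathcal{L}\mathcal{K}_r=\Pi_r$ was previously used. First I would reproduce the time-differentiation of the KL divergence under the invariant measure, which under Assumption~\ref{ass:regularity} gives
\[
\frac{\mathrm{d}}{\mathrm{d}t}\mathrm{KL}(\mu_t\|\pi)
= -\big\langle \nabla f_t,\,\nabla \widehat{\mathcal{K}}_r f_t\big\rangle_\pi
= -\big\langle f_t,\,\mathcal{L}\widehat{\mathcal{K}}_r f_t\big\rangle_\pi,
\]
via the same Dirichlet-form identity $\langle \nabla f,\nabla g\rangle_\pi=\langle f,\mathcal{L}g\rangle_\pi$. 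This step is unchanged from the ideal case because it is agnostic to which operator plays the role of the preconditioner.

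The essential modification enters next. Instead of the exact projection identity, I would substitute the perturbed relation~\eqref{eq:main_proj_perturbed}, obtaining
\[
\frac{\mathrm{d}}{\mathrm{d}t}\mathrm{KL}(\mu_t\|\pi)
= -\langle f_t,\,\Pi_r f_t\rangle_\pi - \langle f_t,\,\delta_r(f_t)\rangle_\pi
= -\|\Pi_r f_t\|_{L^2_\pi}^2 - \langle f_t,\,\delta_r(f_t)\rangle_\pi,
\]
and then control the two terms separately. For the projection term I would use the Pythagorean decomposition together with Assumption~\ref{ass:spec_tail_bd}, now in the \emph{lower-bound} direction, $\|\Pi_r f_t\|_{L^2_\pi}^2=\|f_t\|_{L^2_\pi}^2-\|(I-\Pi_r)f_t\|_{L^2_\pi}^2\ge \|f_t\|_{L^2_\pi}^2-\eta_r^2$. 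For the error term I would invoke Assumption~\ref{ass:approx_error} to get $|\langle f_t,\delta_r(f_t)\rangle_\pi|\le \varepsilon_r\|f_t\|_{L^2_\pi}^2$. Combining the two bounds yields the differential inequality
\[
\frac{\mathrm{d}}{\mathrm{d}t}\mathrm{KL}(\mu_t\|\pi)
\le -(1-\varepsilon_r)\|f_t\|_{L^2_\pi}^2 + \eta_r^2 .
\]

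To close the argument I would write $\|f_t\|_{L^2_\pi}^2=\chi^2(\mu_t\|\pi)$ and apply the comparison $\chi^2\ge \mathrm{KL}$; this substitution is admissible with the correct sign only because $1-\varepsilon_r>0$, which is exactly what the hypothesis $0<\varepsilon_r\ll 1$ secures. This produces $\frac{\mathrm{d}}{\mathrm{d}t}\mathrm{KL}(\mu_t\|\pi)\le -(1-\varepsilon_r)\mathrm{KL}(\mu_t\|\pi)+\eta_r^2$, after which Gr\"onwall's inequality with effective contraction rate $1-\varepsilon_r$ delivers~\eqref{eq:data-bound} directly.

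The computation is routine, so the step I would treat most carefully is the bookkeeping of signs when merging the two estimates: the tail bound must be used as a \emph{lower} bound on $\|\Pi_r f_t\|_{L^2_\pi}^2$, whereas the approximation error enters with indefinite sign and is only controlled in absolute value, so the worst case must be taken. The one substantive hypothesis, rather than a technicality, is $\varepsilon_r<1$: it is what keeps the effective rate $1-\varepsilon_r$ strictly positive, which is simultaneously what guarantees stability of the Gr\"onwall estimate and legitimizes the $\chi^2$-to-$\mathrm{KL}$ comparison. If instead $\varepsilon_r\ge 1$, the method would no longer be guaranteed to contract, so this threshold marks the precise boundary of the result's validity.
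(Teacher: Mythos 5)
Your proposal is correct and follows essentially the same route as the paper's proof: differentiate the KL divergence, invoke the Dirichlet-form identity, substitute the perturbed relation $\mathcal{L}\widehat{\mathcal{K}}_r = \Pi_r + \delta_r$, bound the projection term via the Pythagorean decomposition with Assumption~\ref{ass:spec_tail_bd} and the error term via Assumption~\ref{ass:approx_error}, then apply $\chi^2 \ge \mathrm{KL}$ and Gr\"onwall. Your explicit remark that the $\chi^2$-to-KL comparison and the Gr\"onwall contraction both hinge on $1-\varepsilon_r>0$ is a point the paper leaves implicit, but it is an observation about the same argument, not a different one.
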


\begin{proof}
See Appendix~\ref{app:proof_thm}.
\end{proof}

\begin{remark}
Theorem~\ref{thm:data} quantifies the impact of data-driven spectral approximation. Compared to Proposition~\ref{prop:ideal}, the approximation error $\varepsilon_r$ degrades performance in two ways: (i) it slows the exponential convergence rate from $e^{-t}$ to $e^{-(1-\varepsilon_r)t}$, and (ii) it amplifies the biased error from $\eta_r^2$ to $\eta_r^2/(1-\varepsilon_r)$.
\end{remark}

\subsection{Discrete-Time Analysis via Approximate Gradient Flow}
\label{sec:discrete_agf}

While our analysis in Proposition~\ref{prop:ideal} and Theorem~\ref{thm:data} establishes exponential convergence for the continuous-time dynamics, practical implementation relies on the discrete-time update rule (Algorithm~\ref{alg:kswgd}). To rigorously bridge this gap, we adopt the \textit{Approximate Gradient Flow (AGF)} framework (see \citep{fujisawa2025on} for more details).

The continuous-time dissipation inequality \eqref{eq:data-bound} derived in the proof of Theorem~\ref{thm:data} can be restated in the AGF framework, that is, KSWGD is an $(\alpha, \beta)$-\textit{approximate $\chi^2$-gradient flow} with
\begin{equation*}
    \alpha = 1 - \varepsilon_r \quad \text{and} \quad \beta = \eta_r^2,
\end{equation*}
where $\alpha$ represents the dissipation rate of the update direction and $\beta$ represents the bias floor. 
By adapting the discrete-time analysis of \citep{fujisawa2025on} to our $\chi^2$-geometry, we can directly translate our continuous-time guarantees to the discrete setting.

\begin{corollary}[Discrete-Time Linear Convergence]
\label{cor:discrete_convergence}
Consider the discrete KSWGD update $\mu_{t+1} = (\mathrm{I} - h \widehat{v}_t)\#\mu_t$ with step size $h > 0$ where $\widehat{v}_t=-\nabla \widehat{\mathcal{K}}_r (\mathrm{d}\mu_t/\mathrm{d}\pi)$ \citep{pmlr-v162-salim22a}. Under the conditions of Theorem~\ref{thm:data} and assuming $\mu_t$ is sufficiently close to $\pi$ (i.e., $\chi^2(\mu_t\|\pi) \ge \mathrm{KL}(\mu_t\|\pi)$), the discrete iterations satisfy:
\begin{equation}\label{eq:disc_iter}
    \mathrm{KL}(\mu_{t+1} \| \pi) \le (1 - \alpha h) \mathrm{KL}(\mu_t \| \pi) + h\beta + \mathcal{O}(h^2).
\end{equation}
Iterating this bound gives linear geometric convergence to the noise floor $\beta/\alpha$:
\begin{equation*}
    \mathrm{KL}(\mu_T \| \pi) \le (1 - \alpha h)^T \mathrm{KL}(\mu_0 \| \pi) + \frac{\beta}{\alpha} + \mathcal{O}(h).
\end{equation*}
\end{corollary}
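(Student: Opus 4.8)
The plan is to transfer the continuous-time dissipation bound of Theorem~\ref{thm:data} to the discrete iterates through a first-order Taylor expansion in the step size $h$, and then to iterate the resulting linear recurrence. The conceptual bridge is the observation that the update $\mu_{t+1} = (\mathrm{I} - h\widehat{v}_t)\#\mu_t$ is a forward-Euler discretization of the continuity equation driven by the finite-rank velocity field $\widehat{v}_t = -\nabla\widehat{\mathcal{K}}_r\rho_t$. In the language of the AGF framework of \citep{fujisawa2025on}, KSWGD is an $(\alpha,\beta)$-approximate $\chi^2$-gradient flow with $\alpha = 1-\varepsilon_r$ and $\beta = \eta_r^2$, so the single-step analysis reduces to matching the leading-order decrement of $\mathrm{KL}$ against the continuous dissipation rate.

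First I would expand the divergence to first order along the forward-Euler step:
\[
\mathrm{KL}(\mu_{t+1}\|\pi)
= \mathrm{KL}(\mu_t\|\pi)
+ h\,\frac{\mathrm{d}}{\mathrm{d}t}\mathrm{KL}(\mu_t\|\pi)
+ \mathcal{O}(h^2),
\]
where the first-order coefficient is exactly the instantaneous dissipation rate of the continuous flow evaluated at $\mu_t$, i.e.\ the quantity bounded in \eqref{eq:DKL_tail}. Substituting that bound gives
\[
\mathrm{KL}(\mu_{t+1}\|\pi)
\le \mathrm{KL}(\mu_t\|\pi)
- h(1-\varepsilon_r)\,\chi^2(\mu_t\|\pi)
+ h\,\eta_r^2 + \mathcal{O}(h^2).
\]
Invoking the stated hypothesis $\chi^2(\mu_t\|\pi)\ge\mathrm{KL}(\mu_t\|\pi)$ to bound the $\chi^2$ term from below then yields the one-step recurrence \eqref{eq:disc_iter} with $\alpha = 1-\varepsilon_r$ and $\beta = \eta_r^2$.

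To obtain the iterated estimate I would unroll this linear recurrence. Writing $a_t \coloneqq \mathrm{KL}(\mu_t\|\pi)$ and $c \coloneqq h\beta + \mathcal{O}(h^2)$, and assuming $0 < \alpha h < 1$ so that the coefficients stay nonnegative, induction gives
\[
a_T \le (1-\alpha h)^T a_0 + c\sum_{k=0}^{T-1}(1-\alpha h)^k
\le (1-\alpha h)^T a_0 + \frac{c}{\alpha h},
\]
using the geometric-series bound $\sum_{k=0}^{T-1}(1-\alpha h)^k \le (\alpha h)^{-1}$. Since $c/(\alpha h) = \beta/\alpha + \mathcal{O}(h)$, this is precisely the asserted geometric convergence to the noise floor $\beta/\alpha$.

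The main obstacle is the rigorous justification of the first-order expansion: one must show that the one-step decrement of the discrete pushforward map agrees with the continuous-time dissipation rate up to a remainder that is genuinely $\mathcal{O}(h^2)$ and, crucially, uniform in $t$. Controlling this remainder amounts to bounding the second-order variation of $\mathrm{KL}$ along the interpolation $s \mapsto (\mathrm{I} - s\widehat{v}_t)\#\mu_t$, which requires uniform regularity and boundedness of $\widehat{v}_t = -\nabla\widehat{\mathcal{K}}_r\rho_t$ together with its Jacobian; because $\widehat{\mathcal{K}}_r$ is finite-rank with smooth eigenfunctions these bounds are available in principle, but they must be made uniform over the whole trajectory. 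This uniformity is exactly what the AGF machinery of \citep{fujisawa2025on} supplies, and it is also what allows the accumulated $\mathcal{O}(h^2)$ remainders to collapse to a single $\mathcal{O}(h)$ term after summation.
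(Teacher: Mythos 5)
Your proposal matches the paper's proof essentially step for step: a first-order expansion of the KL change along the pushforward, substitution of the dissipation bound \eqref{eq:DKL_tail} with $\alpha = 1-\varepsilon_r$ and $\beta = \eta_r^2$, the closeness hypothesis $\chi^2(\mu_t\|\pi) \ge \mathrm{KL}(\mu_t\|\pi)$, and unrolling the linear recurrence with the geometric-series bound $\sum_{k=0}^{T-1}(1-\alpha h)^k \le 1/(\alpha h)$. The uniformity of the $\mathcal{O}(h^2)$ remainder that you identify as the main technical obstacle is precisely the point the paper likewise defers to the AGF machinery of \citep{fujisawa2025on,pmlr-v162-salim22a}.
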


\begin{proof}
See Appendix~\ref{app:proof_cor}.
\end{proof}
\begin{remark}
The boundedness of $\mathcal{O}(h^2)$ in \eqref{eq:disc_iter} is controllable; refer to \citep{fujisawa2025on,pmlr-v162-salim22a} for more details.
\end{remark}

Corollary~\ref{cor:discrete_convergence} bridges the gap between our continuous-time theory and the practical Algorithm~\ref{alg:kswgd}. Specifically, by identifying the physical time $t = Th$, the discrete geometric decay explicitly recovers the exponential rate derived in Section~\ref{sec:main_results}, i.e., $(1 - \alpha h)^{t/h} \approx e^{-\alpha t}$ as $h \to 0$. Furthermore, the discrete analysis reveals an unavoidable bias $\mathcal{O}(h)$ added to the noise floor which provides a theoretical guide for the efficiency-accuracy trade-off: a smaller step size $h$ reduces bias but requires more iterations. This detail is not obvious in the continuous-time analysis. Finally, we also note that while standard SVGD requires a strong eigenvalue lower bound \citep[Assumption 6]{fujisawa2025on} to guarantee a bounded approximation error $\epsilon_t$ (a condition often violated by RBF kernels), KSWGD automatically guarantees a bounded error (i.e., constant dissipation rate $\alpha = 1 - \varepsilon_r$) by construction. This is because the rank-$r$ truncation restricts dynamics to the spectral subspace where eigenvalues are strictly lower-bounded, that structurally enables the convergence without external eigenvalue assumptions.

\begin{remark}
The AGF framework clarifies the theoretical advantage of KSWGD over SVGD. As shown by \citep{fujisawa2025on}, standard SVGD suffers from a \textit{decaying} dissipation rate (i.e., $\alpha_t \to 0$), caused by the rapid eigenvalue decay of RBF kernels in RKHS; this inevitably leads to sub-linear convergence. In contrast, KSWGD leverages spectral preconditioning to maintain a \textit{constant} dissipation rate ($\alpha = 1 - \varepsilon_r > 0$) throughout the optimization. This structural difference of preventing the vanishing gradient rate is the fundamental reason why KSWGD achieves linear convergence while SVGD does not in discrete iteration.
\end{remark}

\section{Feynman--Kac Interpretation of KSWGD}\label{sec:feynman-kac}
The Koopman-based formulation of KSWGD admits a coherent interpretation within the Feynman--Kac framework, which places the proposed method in a unified probabilistic and operator-theoretic setting. Specifically, KSWGD corresponds to the zero-potential case ($U \equiv 0$) of the associated Kolmogorov backward equation, characterizing unconditional expectations over stochastic trajectories. This perspective clarifies that KSWGD targets unconditional sampling and distributional prediction, without implicit path conditioning or importance reweighting. Although not essential for the algorithmic construction or convergence analysis, the Feynman--Kac viewpoint provides a concise summary of the method’s probabilistic interpretation and highlights natural extensions to conditional sampling problems. Further details are given in Appendix~\ref{app:feynman-kac}.

\section{Experiments}\label{sec:experiment}
This section evaluates KSWGD empirically across three representative regimes. 
Section~\ref{sec:torus} studies sampling on a compact manifold (a 2D torus) to test geometric adaptivity, Section~\ref{sec:quadruple_well} considers a metastable multi-well system to test multi-modal sampling and barrier-crossing dynamics, and Section~\ref{sec:allen_cahn} targets a high-dimensional SPDE (stochastic Allen--Cahn) to assess scalability. 
Across these settings we compare our method with DMPS and other standard neural generative baselines.

\subsection{Torus $\mathbb{T}^2$}\label{sec:torus}
We apply the Kernel-EDMD combined with the KSWGD algorithm to the 2D torus manifold $\mathbb{T}^2$ embedded in $\mathbb{R}^3$. The dynamics on the torus are modeled by Riemannian Brownian motion, which admits the canonical volume measure as its invariant distribution (see Eq.~\eqref{eq:torus} for the explicit parametrization of $\mathbb{T}^2$ and Eq.~\eqref{eq:stra_sde_torus} for the SDE formulation in Appendix~\ref{app:t_2}).

The Koopman operator is approximated via Kernel-EDMD from trajectory pairs sampled on the torus. For the KSWGD algorithm, particles are initialized in a localized region (in red) near the top of the torus tube. The gradient computation employs a data-driven Riemannian metric structure, where the tangent space at each particle is estimated via local PCA using nearest neighbors in the training data. This approach enables the KSWGD method to naturally adapt to the underlying manifold geometry without requiring explicit knowledge of the manifold parametrization.

Results in Figure~\ref{fig:torus_kswgd} demonstrate successful diffusion of particles from the initial localized configuration (red) toward the uniform target distribution (blue), while preserving the manifold geometry with only a slight deviation from the torus surface throughout the evolution (magenta). Figure~\ref{fig:torus_kswgd2} shows a $3\times3$ overlapping scatter matrix comparing the target distribution (blue) and final particle positions (red) in 3D Cartesian coordinates (x, y, z): the diagonal panels show overlapping histograms of each coordinate's marginal distribution, while the off-diagonal panels display overlapping 2D scatter plots of pairwise coordinates, which displays a direct visual assessment of how well the transported particles match the target distribution in all coordinate projections. A comparison with DMPS method is shown in Figure~\ref{fig:torus_dmps} in Appendix.
\begin{figure*}[!ht]
    \centering
    \includegraphics[width=0.75\linewidth]{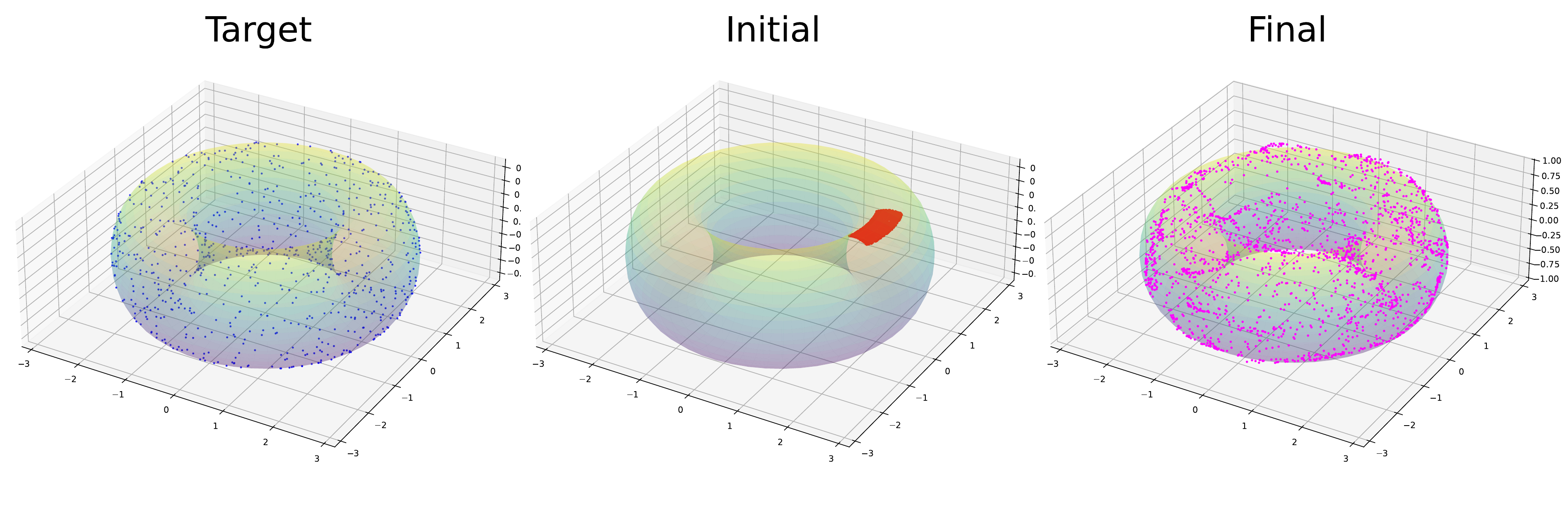}
    \caption{Three-dimensional visualization of particle evolution on the torus. \textbf{(Left)} Target distribution. \textbf{(Center)} Initial configuration. \textbf{(Right)} Final configuration after KSWGD iterations.}
    \label{fig:torus_kswgd}
\end{figure*}

\begin{figure*}[!ht]
    \centering
    \includegraphics[width=0.70\linewidth]{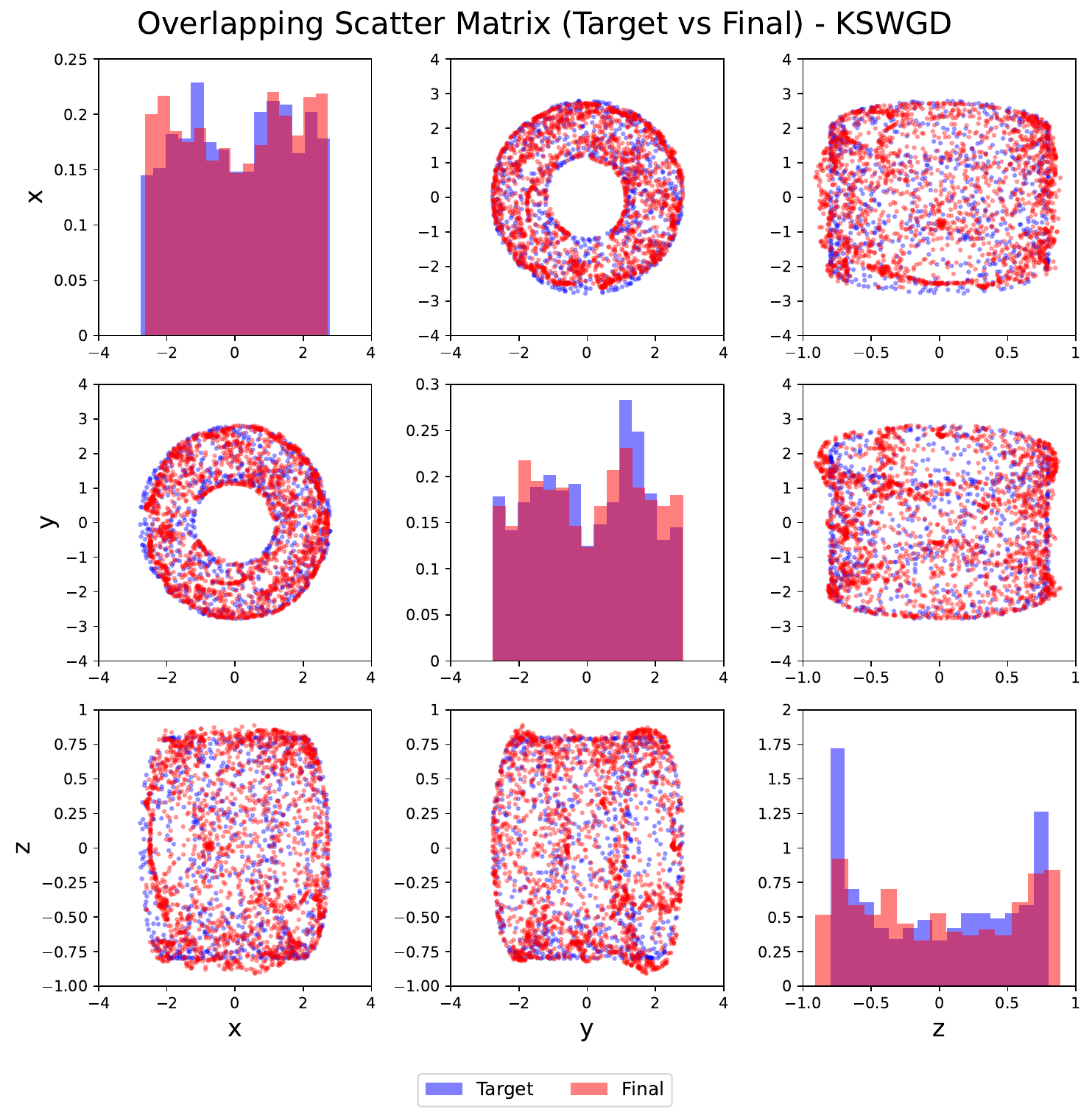}
    \caption{Scatter matrix comparing target (blue) vs. final particle distribution (red) in 3D Cartesian space.}
\label{fig:torus_kswgd2}
\end{figure*}

\subsection{Quadruple Well Potential System}\label{sec:quadruple_well}
We consider a metastable two-dimensional quadruple-well potential $V(x,y) = (x^2-1)^2 + (y^2-1)^2$ characterized by four local minima at $(\pm 1, \pm 1)$. The objective is to sample from the Boltzmann distribution $\pi(x) \propto \exp(-V(x))$ of the associated overdamped Langevin dynamics without explicit knowledge of $V$. The training dataset consists of consecutive time-series pairs generated from the stationary distribution, and the Koopman operator is approximated using SDMD \citep{10.1063/5.0283640} with a neural network-based dictionary (see Appendix~\ref{app:quadruple} for detailed parameter settings).

As illustrated in Figure~\ref{fig:heat_map}, we compare KSWGD with DMPS alongside the Kernel Density Estimation (KDE) of the training data. KSWGD successfully recovers nearly equal density across all four wells, despite the empirical distribution of the training samples being unevenly distributed; in contrast, DMPS produces only a blurred region that conveys limited structural information. In Appendix~\ref{app:mode_collapse}, we further investigate the mode collapse phenomenon and verify that smaller kernel bandwidth values lead to more severe mode collapse \citep{ba2021understanding,liu2016stein}, which is consistent with the analysis in \citep[Section~3.2]{liu2016stein}. Additional comparisons with DMPS are provided in Appendix~\ref{app:quadruple}.

\begin{remark}
    Note that though we use a neural network to train the Koopman matrix approximation, the gradient descent part of the generative process remains purely numerical and training-free.
\end{remark}
\begin{figure}[ht]
    \centering
    \includegraphics[width=0.95\linewidth]{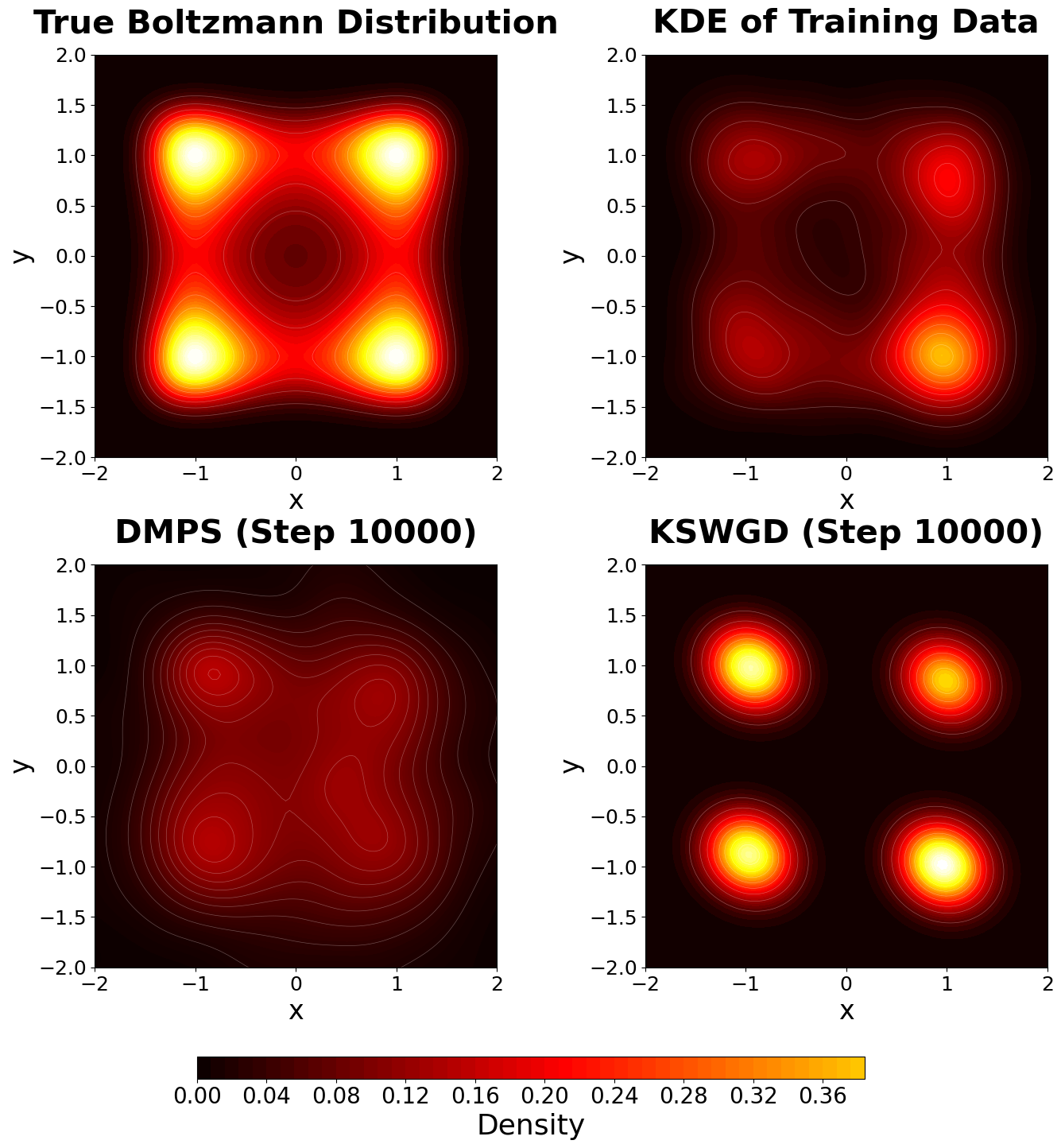}
    \caption{Heat map of DMPS, KSWGD at step 10,000 and KDE of the training data.}
    \label{fig:heat_map}
\end{figure}




\subsection{Stochastic Allen-Cahn Equation}\label{sec:allen_cahn}
We evaluate the predictive capability of KSWGD on the stochastic Allen-Cahn equation \citep{allen1972ground,steinbach2009phase}, which models phase separation kinetics driven by thermal fluctuations and serves as a benchmark for testing generative methods on SPDEs. The system is governed by:
\begin{equation}\label{eq:ac}
    \mathrm{d} u = \left[ D\nabla^2 u - \epsilon^{-2} u(u^2-1) \right] \mathrm{d}t + \sigma \mathrm{d}W_t,
\end{equation}
where $u(x,t)$ is the phase field, $D$ is the diffusion coefficient, and $W_t$ represents the standard Wiener process multiplied by noise intensity $\sigma$. The parameter $\epsilon$ defines the width of the interfacial transition layers. A critical property of this system is its stiffness: as $\epsilon \to 0$, the reaction term $\epsilon^{-2} u(u^2-1)$ becomes large, forcing rapid transitions towards the stable equilibria $u=\pm 1$. This necessitates very fine temporal resolution to maintain numerical stability, making data-driven modeling particularly challenging due to the high-dimensional spatiotemporal correlations. In this experiment, we validate KSWGD's capability for learning these spatiotemporal distributions. We simulate the equation on a high-resolution grid and compress each snapshot to a low-dimensional latent space via a fully-connected autoencoder. Using EDMD with a polynomial dictionary, we construct the Koopman matrix $K$ in latent space from paired data only at $t_0$ and $t_1$. During the KSWGD sampling phase, particles are initialized in the latent space and evolved to generate samples matching the initial distribution. Subsequently, we apply the Koopman matrix $K$ successively to predict future distributions; more specifically, $K^2$ predicts at $t=0.0002$, and $K^5$ predicts at $t=0.0005$. Note that the snapshots at $t_2$ and $t_3$ are not used in training and serve solely to validate the Koopman operator's predictive capability (see Appendix~\ref{app:stochastic_ac_eqn} for detailed parameter settings and Appendix~\ref{app:latent_prediction} for latent-space prediction analysis).

Figure~\ref{fig:ac_kswgd1} shows that the Koopman operator only ``sees'' one spike at $t_0$ and $t_1$, but still captures the dynamics and predicts the two spikes at future time $t_2$ and $t_3$, which is highly consistent with the ground truth. This reflects the operator's ability to capture the essential distributional dynamics underlying the physical trajectory evolution. In addition, we also compare KSWGD against several baseline generative methods, including Diffusion Modeling (DDPM) \citep{ho2020denoising,nichol2021improved}, VAE \citep{kingma2013auto}, Normalizing Flows \citep{dinh2017realnvp}, and GAN \citep{gulrajani2017improved}. All methods operate in the same shared latent space for fair comparison. As shown in Figure~\ref{fig:ac_comparison1}, KSWGD demonstrates competitive performance. This comparison is not meant to suggest that image-oriented generative methods are inadequate, on the contrary, they are highly effective in their intended domains, but rather to illustrate that Koopman-based approaches can be particularly well-suited for problems with inherent temporal dynamics. Additional experiments with different interface width parameter $\epsilon$ are provided in Figure~\ref{fig:ac_21} and Figure~\ref{fig:ac_comparison2} of Appendix~\ref{app:stochastic_ac_eqn}.

\begin{figure}[!htb]
    \centering
    \begin{subfigure}{0.95\columnwidth}
        \centering
        \includegraphics[width=0.99\linewidth]{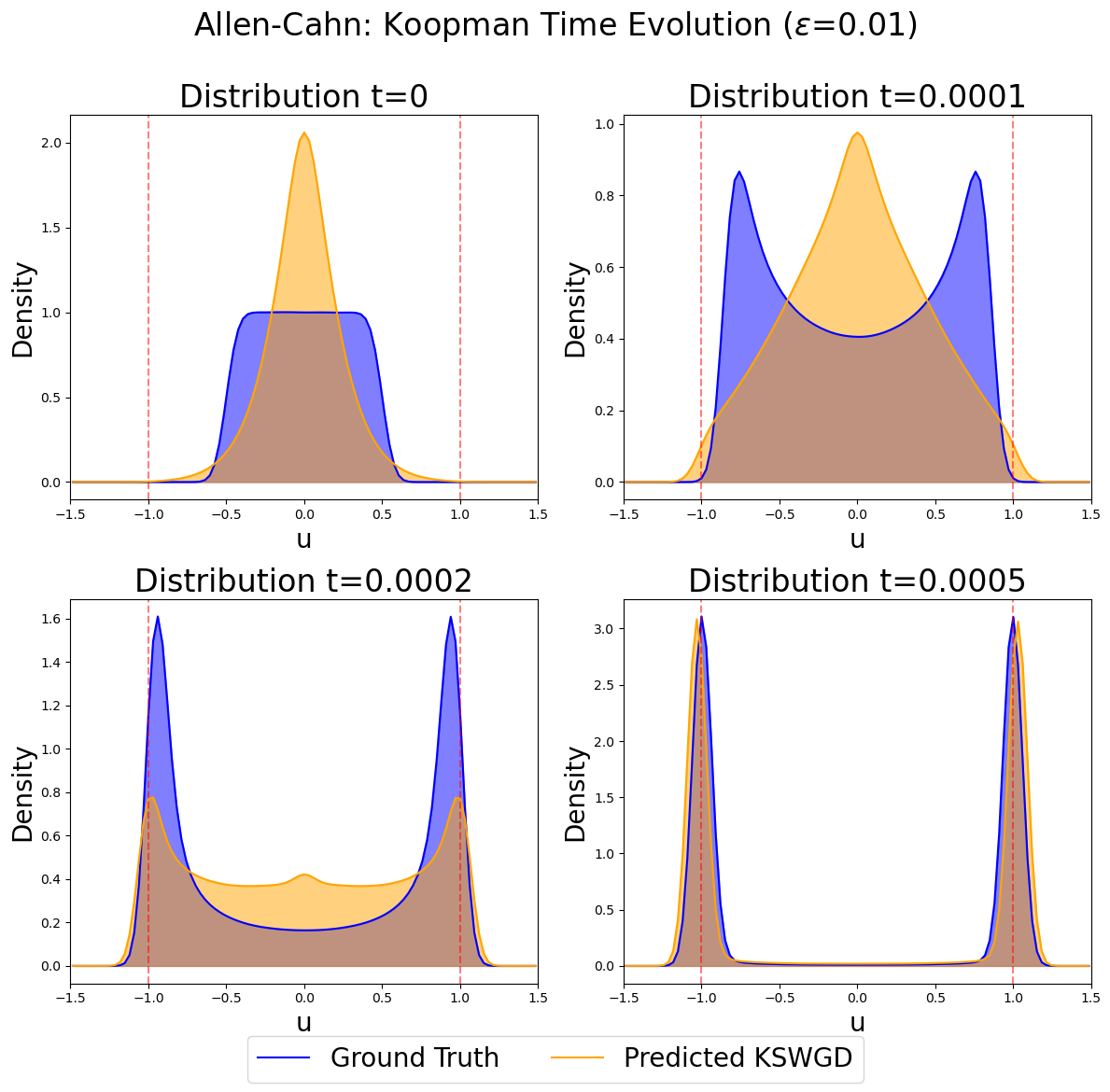}
        \caption{Allen-Cahn equation example using KSWGD with EDMD (Polynomial).}
        \label{fig:ac_kswgd1}
    \end{subfigure}

    \vspace{0.1cm} 

    \begin{subfigure}{0.95\columnwidth}
        \centering
        \includegraphics[width=0.99\linewidth]{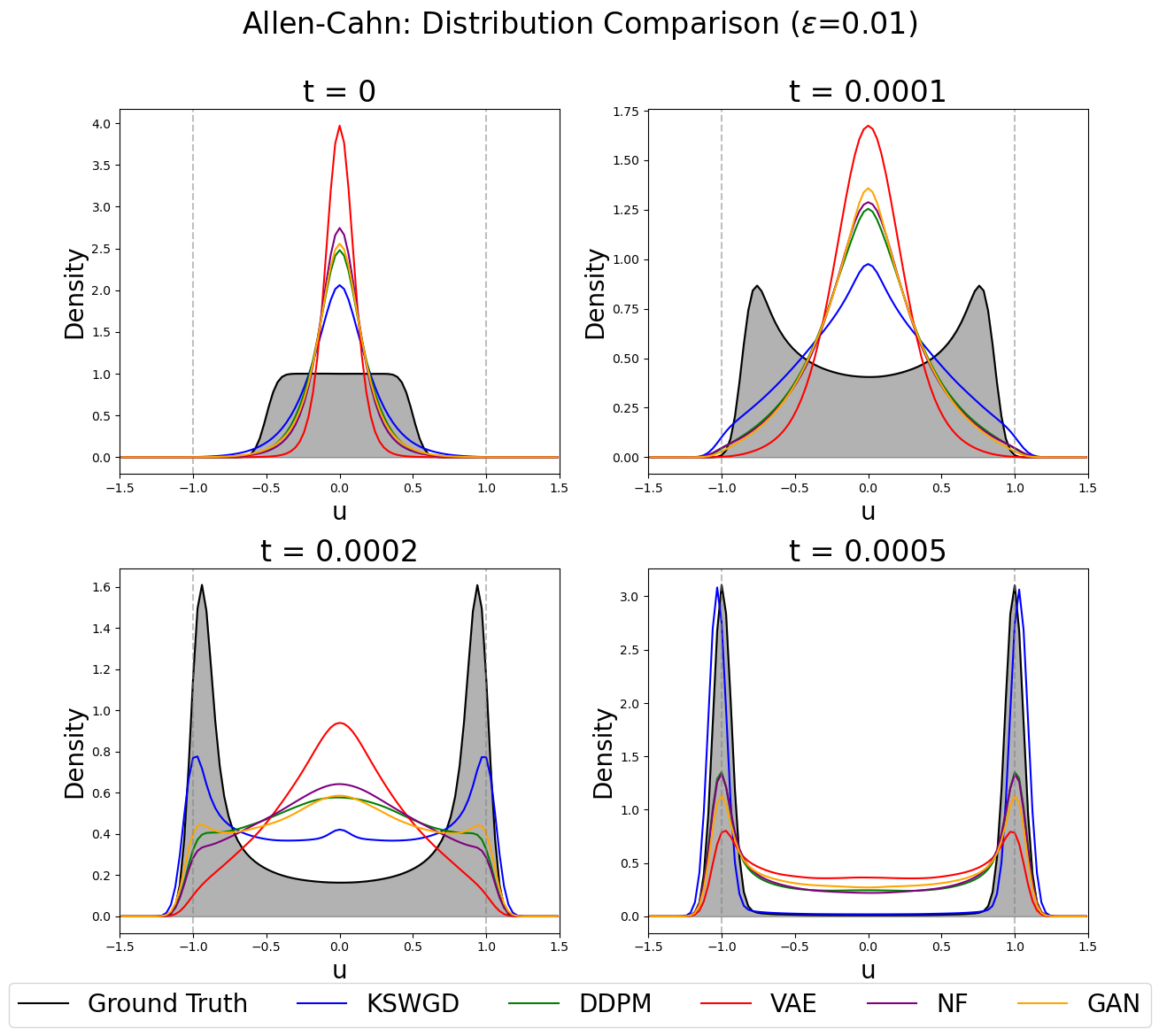}
        \caption{Comparison: KSWGD vs DDPM, VAE, Normalizing Flows and GAN.}
        \label{fig:ac_comparison1}
    \end{subfigure}    
    \caption{Learning the stochastic Allen-Cahn dynamics via KSWGD to approximate phase-field evolution.}
    \label{fig:ac_main}
\end{figure}

\section{Conclusion}
In this work, we introduced KSWGD, a training-free, particle-based generative modeling framework that connects Koopman operator theory with Wasserstein gradient flows. Exploiting the link between the Koopman generator and the Langevin generator, KSWGD constructs a spectral kernel in a data-driven manner without access to the target potential. Our analysis establishes scale-free exponential convergence, with explicit error bounds determined by the truncation rank and the quality of the Koopman approximation. Compared with SVGD, spectral preconditioning maintains a constant dissipation rate, mitigating the vanishing-gradient behavior that typically leads to sublinear convergence. We also provide a Feynman--Kac interpretation that places KSWGD in a broader operator-theoretic setting. Extending the framework to conditional sampling with non-zero killing rates is an interesting direction for future work. Experiments on compact manifolds, metastable systems, and high-dimensional SPDEs demonstrate the effectiveness of our approach.




\section*{Impact Statement}
This paper presents work whose goal is to advance the field of Machine
Learning. There are many potential societal consequences of our work, none which we feel must be specifically highlighted here.

\bibliography{example_paper}
\bibliographystyle{icml2026}

\newpage
\appendix
\onecolumn
\section{Experiments}\label{sec:appendix}
Here we are going to show extra comparison tests between KSWGD and other baseline methods such as DMPS, etc. in various systems that are discussed in the main content. We also provide extra necessary background and clarification along with the experiment results.

\subsection{Source Code}
For reproducibility, the source code will be available at this \href{https://github.com/TalkingDoll/kswgd}{link}.

\subsection{Quadruple Potential Well System}\label{app:quadruple}

\paragraph{Training data generation.} We generate a trajectory from the stationary distribution of the Quadruple well potential system as proposed in Section~\ref{sec:quadruple_well} via MCMC sampling and evolve each state forward using an Euler–Maruyama discretization with time step $\Delta t = 0.01$. This produces 10,000 consecutive time-series pairs $(X_t, X_{t+\Delta t})$ for Koopman operator approximation.

\paragraph{KSWGD parameters.} We initialize 2,000 particles from the standard 2-dimensional normal distribution and apply KSWGD for $T=10,000$ iterations with step size $h=0.02$. The kernel bandwidth is set to $\text{med}^2/(2\log N)$ following \citep{liu2016stein}.

\paragraph{Comparison with DMPS.} When the kernel bandwidth is much smaller, DMPS converges significantly more slowly compared to KSWGD, with many particles remaining outside the wells at iteration 10,000, as shown in Figure~\ref{fig:quadruple_kswgd_dmps}. Both methods become stable after 10,000 steps, as shown in Figure~\ref{fig:kl}, which plots the average $\text{Movement Rate}=\frac{1}{N}\sum_{i=1}^N \lVert x_i^{(t+1)}-x_i^{(t)}\rVert$ as a direct indicator of dynamical stability. A movement rate near zero indicates that particles have effectively stopped moving. 

\subsection{Mode Collapse}\label{app:mode_collapse}
As pointed out in \citep[Section~3.2]{liu2016stein}, deterministic particle-based sampling methods face inherent variance collapse issues. As in most variational inference methods, the particles over-concentrate in high-density regions, which is analogous in the Quadruple well potential system example here; specifically, the particles collapse toward potential well minima, which gives a characteristic U-shaped KL divergence trajectory, as shown in Figure~\ref{fig:kl2} and \ref{fig:quadruple_kswgd2}. As illustrated in Figure~\ref{fig:quadruple_kswgd2}, the red dots denote the initial particle from uniform distribution, while the purple circles indicate their final positions after 3,000 steps. It can be observed that each particle is successfully transported to and stops at the basin of attraction (i.e., the well) nearest to its initialization. Interestingly, we also see that the "bottom" of the wells are not in the center, which indicates the existence of bias here.

\subsection{CelebA-HQ}\label{app:celeba}
In this section, we provide a preliminary exploration of applying KSWGD to image generation on the CelebA-HQ dataset. It is important to note that our KSWGD framework built upon Koopman operator approximation is inherently designed for dynamical systems where temporal evolution plays a central role. Consequently, the method is naturally suited for time-series data and sequential generative modeling tasks. Nevertheless, we investigate its applicability to static image generation as a proof-of-concept experiment, similar as the $S^1$ example in Appendix~\ref{app:s_1}.

To handle the high dimensionality, we leverage the latent space of a pre-trained VAE from the LDM framework (CompVis/ldm-celebahq-256 \citep{NEURIPS2022_62868cc2,rombach2021highresolution}). Specifically, we encode 30,000 real images into the VAE latent space and further reduce the dimensionality to 8 via an MLP AutoEncoder to establish the target distribution. The generative process involves evolving 16 particles, initialized from a standard normal distribution, for 500 iterations with a step size of $h=0.001$. The transported particles are then projected back and reconstructed into images via the decoder.

As shown in Figure~\ref{fig:celeba_with_table}, while KSWGD can generate recognizable facial images without extensive hyperparameter tuning, the FID scores (evaluated on 1,000 generated samples due to the limitation of computational resource) indicate a notable performance gap compared to Latent Diffusion Models with 200 diffusion steps. This is expected, as diffusion-based methods are specifically architected for image synthesis with sophisticated noise schedules and network designs. Our results suggest that adapting Koopman-based approaches to image generation remains a promising but challenging direction. Potential improvements may include incorporating temporal structure into the generation process or developing hybrid architectures that combine the dynamical systems perspective with modern generative frameworks. We leave these explorations for future work.

\subsection{Brownian Motion on Compact (Riemannian) Manifolds}
This section provides a comprehensive explanation of the mathematical framework underlying the KSWGD algorithm on compact (Riemannian) manifolds, with specific focus on the surface of 2D-torus $\mathbb{T}^2$ and 1D-shere $S^1$ implementation.

\subsubsection{Torus $\mathbb{T}^2$}\label{app:t_2}
The torus $\mathbb{T}^2$ in Section~\ref{sec:torus} is parametrized by angular coordinates $(\theta, \phi) \in [0, 2\pi)^2$ as
\begin{equation}\label{eq:torus}
    x = (R + r\cos\phi)\cos\theta, \quad y = (R + r\cos\phi)\sin\theta, \quad z = r\sin\phi,
\end{equation}
where $R = 2.0$ is the major radius (distance from the origin to the tube center) and $r = 0.8$ is the minor radius (tube radius).

\paragraph{Dynamics on $\mathbb{T}^2$.} We model Brownian motion on the torus using the Stratonovich SDE:
\begin{equation}\label{eq:stra_sde_torus}
    \mathrm{d}\theta = \frac{\sqrt{2}}{R + r\cos\phi} \circ \mathrm{d}W_t^1, \quad \mathrm{d}\phi = \frac{\sqrt{2}}{r} \circ \mathrm{d}W_t^2,
\end{equation}
which corresponds to Riemannian Brownian motion with invariant measure $\mathrm{d}\mu = r(R+r \cos \phi) \,\mathrm{d}\theta \,\mathrm{d}\phi$ \cite{hsu2002stochastic}. For numerical implementation, we use the equivalent It\^{o} form with the Wong-Zakai correction term (see Appendix~\ref{app:stra_sde_manifold} and ~\ref{app:ito_stra_correction}).

\paragraph{Koopman operator approximation.} Training data pairs $\{(\bm{x}^i, \hat{\bm{x}}^i_{\Delta t})\}_{i=1}^N$ are constructed by sampling $N = 800$ initial points uniformly on $\mathbb{T}^2$ and advancing them by $\Delta t = 0.05$. The kernel matrices $K_{xx}$ and $K_{xy}$ are computed using the RBF kernel $k(x,y) = \exp(-\|x-y\|^2/(2\varepsilon))$ with bandwidth $\varepsilon$ determined by the median heuristic. The Koopman operator is approximated as $K = K_{xy}(K_{xx} + \gamma I)^{-1}$ with ridge regularization $\gamma = 10^{-6}$.

\paragraph{KSWGD parameters.} We initialize $m = 2000$ particles in a localized region near $\theta \approx 0$ and $\phi \approx \pi/2$ (the top of the torus tube) with angular spreads of $0.3$ radians in both directions. The tangent space at each particle is estimated via local PCA using $k = 30$ nearest neighbors. The algorithm is run for $5000$ iterations with step size $h = 0.005$.

\subsubsection{Stratonovich SDE on Manifolds}\label{app:stra_sde_manifold}
Brownian motion on a Riemannian manifold $(\mathcal M,g)$, where $\mathcal M$ is a smooth $n$-dimensional manifold endowed with a Riemannian metric $g=(g_{ij})$, can be represented in local coordinates $(q^1,\ldots,q^n)$ by a \emph{Stratonovich} stochastic differential equation \cite{hsu2002stochastic} of the form
\[
dq^i=\sigma^i_j(q)\circ dW_t^j,\qquad i=1,\ldots,n,
\]
where $W_t^j$ are independent standard Brownian motions and $\circ$ denotes Stratonovich integration.
The diffusion coefficients $\sigma^i_j(q)$ are chosen so that the induced diffusion tensor coincides with the inverse metric, namely
\[
\sum_k \sigma^i_k(q)\,\sigma^j_k(q)=g^{ij}(q).
\]
With this choice, interpreting the columns of $\sigma$ as a local orthonormal frame, the resulting process has generator $\tfrac12\Delta_g$ under the standard convention. Moreover, the associated semigroup is symmetric with respect to the Riemannian volume measure; in particular, on a compact manifold without boundary, the normalized volume measure is invariant. Notice that Stratonovich SDEs are coordinate-invariant and naturally preserve manifold constraints, so no additional correction drift is required to account for the metric structure.

\subsubsection{It\^{o}-Stratonovich Conversion (Wong-Zakai Correction)}\label{app:ito_stra_correction}
For numerical simulation, we often convert Stratonovich SDEs to It\^{o} form. The conversion introduces an additional drift term (It\^{o} correction):
$$
dq^i = \underbrace{b^i(q) \, dt}_{\text{It\^{o} correction}} + \sigma^i_j(q) \, dW^j_t
$$
where the drift term is:
$$
b^i = \frac{1}{2} \sum_{j,k} \sigma^k_j \frac{\partial \sigma^i_j}{\partial q^k}.
$$
The It\^{o} correction arises because that the diffusion coefficients $\sigma^i_j$ depend on position $q$. In Stratonovich calculus, the noise "sees" the gradient of $\sigma$, but this also creates a bias toward regions where diffusion is stronger. Notice that the It\^{o} correction is related to the Christoffel symbols \cite{do1992riemannian} (connection coefficients) of the manifold which ensures that the process respects the curved geometry.

\subsubsection{Torus-Specific SDE}

Consider the two-dimensional torus $\mathbb{T}^2$ embedded in $\mathbb{R}^3$ with parametrization
\[
X(\theta, \phi)
= \bigl((R + r\cos\phi)\cos\theta,\,
        (R + r\cos\phi)\sin\theta,\,
        r\sin\phi\bigr),
\quad (\theta,\phi)\in[0,2\pi)^2.
\]
The induced Riemannian metric in angular coordinates is
\[
g = \mathrm{diag}\bigl((R + r\cos\phi)^2,\; r^2\bigr).
\]

The Brownian motion on $\mathbb{T}^2$ associated with the uniform invariant measure is given in Stratonovich form by
\[
\begin{aligned}
d\theta &= \frac{\sqrt{2}}{R + r\cos\phi} \circ dW^1_t,\\
d\phi &= \frac{\sqrt{2}}{r} \circ dW^2_t.
\end{aligned}
\]

Converting to It\^{o} form would create an additional drift term in the $\theta$-equation; more specifically,
$$
d\theta = \frac{\sin\phi}{r(R + r\cos\phi)^2}\, dt
          + \frac{\sqrt{2}}{R + r\cos\phi}\, dW^1_t,
$$
which arises from the It\^{o}--Stratonovich correction induced by the $\phi$-dependence of the metric coefficient.

\subsubsection{Manifolds and Tangent Spaces}
A smooth manifold $\mathcal{M}$ of dimension $n$ embedded in $\mathbb{R}^d$ (where $d \geq n$) is a space that locally looks like $\mathbb{R}^n$. At each point $x \in \mathcal{M}$, the tangent space $T_x \mathcal{M}$ is the $n$-dimensional vector space of all possible velocity vectors of curves passing through $x$:
$$
T_x \mathcal{M} = \left\{ v \in \mathbb{R}^d : v = \frac{d}{dt}\bigg|_{t=0} \gamma(t), \, \gamma: [0,1] \to \mathcal{M}, \, \gamma(0) = x \right\}.
$$
Notice that $T_x \mathcal{M}$ is a linear subspace of $\mathbb{R}^d$ of dimension $n$ and the normal space $N_x \mathcal{M} = (T_x \mathcal{M})^\perp$ is the orthogonal complement. So, any vector $v \in \mathbb{R}^d$ can be decomposed as $v = v_{\text{tangent}} \oplus v_{\text{normal}}$. For example, consider the torus $\mathbb{T}^2 \subset \mathbb{R}^3$ at each point $x \in \mathbb{T}^2$ in our case, the tangent space $T_x \mathbb{T}^2$ is two-dimensional, while the normal space $N_x \mathbb{T}^2$ is one-dimensional. In particular, the tangent space is spanned by the coordinate vector fields
$\partial X / \partial \theta$ and $\partial X / \partial \phi$ associated with
the angular parametrization of the torus.

\subsubsection{Riemannian Metric and Gradient}
A Riemannian metric $g$ assigns an inner product to each tangent space
$T_x \mathcal{M}$. This structure allows us to define gradients of functions directly on the manifold. In particular, it allows us to define gradients of functions in a coordinate-free manner. For a smooth function $f : \mathcal{M} \to \mathbb{R}$, the Riemannian gradient
$\nabla_{\mathcal{M}} f(x)$ is defined implicitly by the relation
\[
g_x\bigl(\nabla_{\mathcal{M}} f(x), v\bigr) = D_vf(x),
\qquad \forall v \in T_x \mathcal{M},
\]
where $D_vf(x)$ denotes the directional derivative of $f$ at $x$ along $v$. When $\mathcal{M} \subset \mathbb{R}^d$ is an embedded submanifold equipped with
the metric induced from the ambient Euclidean space, the Riemannian gradient
admits a simple expression:
\[
\nabla_{\mathcal{M}} f(x)
= \Pi_{T_x \mathcal{M}}\bigl(\nabla f(x)\bigr),
\]
that is, it is obtained by orthogonally projecting the ambient Euclidean gradient
onto the tangent space.

\subsubsection{Computing Tangent Space}

If the manifold has a known parametrization $X: U \subset \mathbb{R}^n \to M \subset \mathbb{R}^d$, then:
$$
T_x \mathcal{M} = \text{span}\left\{ \frac{\partial X}{\partial q^1}, \ldots, \frac{\partial X}{\partial q^n} \right\}.
$$
For example, in the torus $\mathbb{T}^2$, we have:
$$
\frac{\partial X}{\partial \theta} = \begin{pmatrix}
-(R + r\cos\phi)\sin\theta \\
(R + r\cos\phi)\cos\theta \\
0
\end{pmatrix}, \quad
\frac{\partial X}{\partial \phi} = \begin{pmatrix}
-r\sin\phi\cos\theta \\
-r\sin\phi\sin\theta \\
r\cos\phi
\end{pmatrix}.
$$
If we have a point cloud sampled from the manifold, we can estimate the tangent space using Principal Component Analysis (PCA) on local neighborhoods, which is data-driven and does not need parametrization.

In our torus $\mathbb{T}^2$ example, for each query point $x$:
\begin{enumerate}
  \item Find the $k$ nearest neighbors $\{x_1, \ldots, x_k\}$ in the data cloud $X_{\mathrm{tar}}$.
  
  \item Center the neighbors:
  \[
  x_i^{\text{centered}} = x_i - \operatorname{mean}(\{x_1, \ldots, x_k\}).
  \]
  
  \item Compute the local covariance matrix:
  \[
  C = \frac{1}{k} \sum_{i=1}^k x_i^{\text{centered}} (x_i^{\text{centered}})^{\top}.
  \]
  
  \item Perform eigendecomposition:
  \[
  C = V \Lambda V^{\top}.
  \]
  
  \item Take the top $n$ eigenvectors as a tangent basis:
  \[
  T_x \mathcal{M} \approx \operatorname{span}\{v_1, \ldots, v_n\}.
  \]
\end{enumerate}
Since a smooth manifold is locally well approximated by its tangent space, nearby data points lie approximately in a low-dimensional linear subspace. PCA captures the directions of largest variance, which align with the tangent directions of the manifold, whereas small eigenvalues correspond to directions orthogonal to the manifold or variations induced by curvature. However, the approximation error depends on neighborhood size $k$ and curvature. It is also sensitive to noise, thus particles may drift off manifold after a few numerical iterations.

\subsection{1-Sphere $S^1$}\label{app:s_1}
Now we evaluate the proposed KSWGD framework on another example of 1-dimensional spherical manifold $S^1$, targeting a uniform distribution over the unit circle. The data-driven spectral learning utilized $N=500$ training samples $\{\bm{x}^i\}_{i=1}^N$ drawn from this target where $\bm{x}^i=(x_1^i,x_2^i)$ is the i-th sample. To generate training data of time-evolution snapshot pairs $\{(\bm{x}^i, \hat{\bm{x}}^i_{\Delta t})\}_{i=1}^N$ without knowing the potential $V$, we simulated dynamics over a short interval $\Delta t = 0.05$, approximating the drift term $\nabla \log \pi$ via Kernel Density Estimation (KDE) as shown in Appendix~\ref{app:kde} \citep{parzen1962estimation}. Notice that $\hat{\bm{x}}^i_{\Delta t}$ is not exactly from data, rather an augmented data from simulating the Langevin SDE as in~\eqref{eq:sde}. Based on these pairs, the Koopman operator was approximated via Kernel-EDMD \cite{kevrekidis2016kernel} using both a \textit{Polynomial} kernel of degree 10 (see Figure~\ref{fig:sphere_kswgd_rbf}) and a \textit{Gaussian RBF} kernel (see Figure~\ref{fig:sphere_kswgd_polynomial}), using Tikhonov regularization with $\gamma = 10^{-6}$ for numerical stability.

For the generative process, we initialized $M=700$ particles concentrated at the top of the 1-sphere; specifically, $y > 0.7$ depicted in red dots, and evolved them according to the learned Koopman spectral gradients (i.e., KSWGD) for $T=1000$ iterations with a step size of $h=2$. As shown in Figure~\ref{fig:sphere_kswgd}, KSWGD successfully push the particles out and uniformly cover the manifold, as depicted in the purple circles. In contrast, a comparative analysis using the Diffusion Map Particle System (DMPS) as a baseline \citep{li2025diffusion} under identical settings fails to effectively cover the manifold. The result is presented in Figure~\ref{fig:sphere_dmps}. The Algorithm~\ref{alg:kswgd_static} is given below.
\begin{algorithm}[tb]
  \caption{KSWGD for Static Time Data}
  \label{alg:kswgd_static}
  \begin{algorithmic}
    \STATE {\bfseries Input:} training samples $\{z^{(j)}\}_{j=1}^N \sim \pi$, initial particles $\{x_0^{(i)}\}_{i=1}^M$, step size $h$, time step $\Delta t$, KDE bandwidth $\sigma$, truncation rank $r$, dictionary size $n$, max iterations $T$.
    \STATE Estimate score function $\nabla \log \widehat{\pi}(x)$ using KDE.
    \FOR{$j = 1$ {\bfseries to} $N$}
        \STATE Sample $\xi^{(j)} \sim \mathcal{N}(0, I_d)$.
        \STATE $\hat{z}_{\Delta t}^{(j)} \gets z^{(j)} - \Delta t \cdot \nabla \log \widehat{\pi}(z^{(j)}) + \sqrt{2\Delta t}\,\xi^{(j)}$.
    \ENDFOR
    \STATE Construct dictionary $\{\psi_k\}_{k=1}^n$ and estimate the Koopman operator using pairs $\{(z^{(j)}, \hat{z}_{\Delta t}^{(j)})\}_{j=1}^N$.
    \STATE Compute leading $r$ eigenpairs $\{(\widehat{\lambda}_k, \widehat{\phi}_k)\}_{k=1}^r$ of $-\widehat{\mathcal{A}}_N$.
    \FOR{$t = 0$ {\bfseries to} $T-1$}
        \FOR{$i = 1$ {\bfseries to} $M$}
            \STATE Initialize $\mathbf{v}_i \gets \mathbf{0} \in \mathbb{R}^d$.
            \FOR{$j = 1$ {\bfseries to} $M$}
                \STATE $\mathbf{v}_i \gets \mathbf{v}_i + \sum_{k=1}^r 
                \frac{\nabla \widehat{\phi}_k(x_t^{(i)}) \cdot \widehat{\phi}_k(x_t^{(j)})}
                {\widehat{\lambda}_k}$.
            \ENDFOR
            \STATE $x_{t+1}^{(i)} \gets x_t^{(i)} - \frac{h}{M} \mathbf{v}_i$.
        \ENDFOR
    \ENDFOR
    \STATE {\bfseries Output:} generated particles $\{x_T^{(i)}\}_{i=1}^M$.
  \end{algorithmic}
\end{algorithm}

Next we also compare KSWGD to DMPS on $S^1$, as shown in Figure~\ref{fig:sphere_dmps}. Under the same experimental settings, DMPS fails to reach full convergence to the target distribution, even when the number of iterations or the step size is moderately increased. This further highlights the advantage of KSWGD’s spectral preconditioning in sustaining an effective gradient flow throughout the optimization.

\subsection{KDE-Based Score Estimation}\label{app:kde}

The score function $\nabla \log \pi(x)$ is estimated using a Gaussian kernel density estimator as following:

First we compute pairwise squared distances:
\begin{equation*}
    D_{ij}^2 = \|x_i - x_j\|^2, \quad i,j = 1,\ldots,n.
\end{equation*}
Then we select bandwidth  by median heuristic: $h = \sqrt{\text{median}(D^2)}$. Next, we compute the Gaussian kernel weights:
\begin{equation*}
    W_{ij} = \exp\left(-\frac{\|x_i - x_j\|^2}{2h^2}\right),
\end{equation*}
and normalize weights $w_{ij} = \frac{W_{ij}}{\sum_{k=1}^{n} W_{ik}}$. Then we compute weighted mean $\bar{x}_i = \sum_{j=1}^{n} w_{ij} x_j$. The score function estimation is give as follows:
\begin{equation*}
    \nabla \log \hat{\pi}(x_i) = \frac{\bar{x}_i - x_i}{h^2} = \frac{1}{h^2}\left(\frac{\sum_{j=1}^{n} W_{ij} x_j}{\sum_{j=1}^{n} W_{ij}} - x_i\right).
\end{equation*}

This estimator approximates the gradient of the log-density by computing the direction from each point $x_i$ toward its local kernel-weighted centroid, scaled by the squared bandwidth $h^2$.

\subsection{Koopman Operator Prediction in Latent Space}\label{app:latent_prediction}
To predict the time evolution of latent dynamics in the Stochastic Allen-Cahn equation, we employ an Extended Dynamic Mode Decomposition (EDMD) approach with polynomial dictionary functions. Let $z \in \mathbb{R}^d$ denote the latent vector. The central idea is to lift the latent representation into a higher-dimensional feature space where the dynamics become approximately linear.

We define a polynomial feature map $\Phi: \mathbb{R}^d \to \mathbb{R}^{N_K}$ that includes all monomials up to degree $p$:
$$
\Phi(z) \coloneqq [\underbrace{1}_{\text{constant}}, \underbrace{z_1, \ldots, z_d}_{\text{coordinates}}, \underbrace{z_1^2, z_1 z_2, \ldots, z_1^p, \ldots, z_d^p}_{\text{higher-order terms up to degree $N_K$}}]
$$
Crucially, the original latent coordinates $[z_1, \ldots, z_d]$ appear explicitly as second through $(d+1)$-th entries in this dictionary $\Phi(z)$. Given paired trajectory data $\{(z_i^{(0)}, z_i^{(\Delta t)})\}_{i=1}^N$ sampled at times $t=0$ and $t=\Delta t$, we estimate the finite-dimensional Koopman operator $K \in \mathbb{R}^{N_K \times N_K}$ by solving the least-squares problem:
$$
\Phi(z^{(\Delta t)}) \approx \Phi(z^{(0)}) \, K.
$$

For multi-step prediction, we first compute its lifted representation $\Phi(z_t)$ starting from a latent state $z_t$, then propagate forward in the feature space via $\Phi_{t+\Delta t} = \Phi(z_t) K$. To recover the predicted latent state, we simply extract the $[z_1, \ldots, z_d]$ part from $\Phi_{t+\Delta t}$, which gives the updated coordinates directly. This operation is repeated to generate trajectories over arbitrary time steps, which enables the Koopman operator to capture the nonlinear latent dynamics through the polynomial dictionary.

\subsection{Feynman-Kac Interpretation of KSWGD}\label{app:feynman-kac} 

The Koopman operator employed in KSWGD admits a natural interpretation within the Feynman-Kac framework \citep{bertini1995stochastic,karatzas2014brownian,oksendal2003stochastic}. Recall that the general Feynman-Kac formula provides the solution to the following equation:
$$\partial_t v = \mathcal{A} v - U(x) v, \quad v(x,0) = f(x),$$
as the path integral:
$$v(x,t) = \mathbb{E}\left[f(X_t) \exp\left(-\int_0^t U(X_s) ds\right)\mid X_0=x\right],$$
where $U(x)$ acts as a \textit{killing} (or potential) term and $\mathcal{A}$ is the Kolmogorov backward operator (i.e., the infinitesimal generator of the Koopman semigroup associated with the underlying stochastic dynamics as discussed in Section~\ref{sec:koopman}). 

\paragraph{Probabilistic interpretation of the killing rate.}
The terminology ``killing rate'' arises from a probabilistic interpretation; more specifically, considering a particle following the stochastic trajectory $X_s$, at each position $x$, the particle is ``killed'' (removed from the system) at rate $U(x)$. The exponential term $\exp\left(-\int_0^t U(X_s) \, ds\right)$ then represents the survival probability of the particle along its path up to time $t$. When $U(x) > 0$, paths passing through regions of high $U$ are down-weighted; when $U \equiv 0$, all paths contribute equally.

\paragraph{What does $U = 0$ mean in KSWGD?}
The Koopman semigroup $\{\mathcal{T}^t\}_{t\geq 0}$ used in KSWGD corresponds to the zero-potential case $U \equiv 0$:
$$v(x,t) = (\mathcal{T}^t f)(x) = \mathbb{E}[f(X_t)\mid X_0=x].$$
This choice is not an approximation but rather the mathematically appropriate choice for our task. Importantly, the potential $U$ in the Feynman-Kac formula is distinct from the drift potential $V$ that defines the target distribution $\pi \propto e^{-V}$; the drift potential $V$ is already encoded in the Langevin generator $\mathcal{L} = -\mathcal{A} = \nabla V \cdot \nabla - \Delta$. Setting $U = 0$ means that we compute unconditional expectations over paths, which is precisely what is needed for sampling the marginal distribution at $t=0$. Specifically, KSWGD targets the distribution of initial conditions $\pi_0$ without conditioning on future behavior, and the pushforward $(\mathcal{T}^t)_\# \pi_0$ gives the marginal distribution at time $t$ without path conditioning.

\paragraph{Extension to conditional sampling.}
In contrast, $U \neq 0$ would be appropriate for conditional sampling problems, e.g., sampling initial conditions that lead to a specific terminal state such as rare event sampling \citep{weinan2021applied} or importance-weighted path integrals for computing conditional expectations. If one seeks initial conditions that evolve toward a specific target set $B\subset\mathbb{R}^d$, the framework naturally extends to $U \neq 0$ via importance sampling:
$$\tilde{\pi}_0(x) \propto \pi_0(x) \cdot \mathbb{E}\left[\mathbf{1}_B(X_T) \exp\left(-\int_0^T U(X_s) ds\right)\mid X_0=x\right].$$
However, detailed analysis and development of the framework for $U \neq 0$ are beyond the scope of the current study and represent a promising direction for future research.

\begin{remark}
The linear Feynman-Kac formula may appear to conflict with nonlinear partial differential equations such as the Allen-Cahn equation in Section~\ref{sec:allen_cahn}. However, these two equations describe different objects. The Allen-Cahn equation governs the evolution of the state $u(x,t)$ and is nonlinear in $u$. In contrast, the Feynman-Kac PDE governs the evolution of expected observables $v(x,t) = (\mathcal{T}^t f)(x)$, which is linear in $f$. This is precisely the essence of Koopman operator theory; more specifically, a nonlinear dynamical system acting on states induces a linear operator acting on observables. See \citep[Section 8.2]{oksendal2003stochastic} for more details.
\end{remark}

\subsubsection{Stochastic Allen-Cahn Equation}\label{app:stochastic_ac_eqn}

We simulate the stochastic Allen-Cahn equation of Eq.~\eqref{eq:ac} on a $128 \times 128 = 16,384$ grid with time step $\Delta t = 10^{-5}$, diffusion coefficient $D=0.001$, interface width $\epsilon=0.01$, and noise intensity $\sigma=\sqrt{2}$. The small $\epsilon$ drives rapid reaction dynamics, necessitating fine temporal resolution. We generate 300 independent realizations with snapshots recorded at $t \in \{0, 0.0001, 0.0002, 0.0005\}$. Each snapshot is compressed to a 16-dimensional latent space via a fully-connected autoencoder with architecture $16384 \to 512 \to 128 \to 16$ (encoder) and symmetric decoder.

\paragraph{KSWGD parameters.} The Koopman matrix $K$ is constructed using EDMD with 2nd-order polynomial dictionary functions from paired data snapshot at $t_0=0$ and $t_1=0.0001$, with regularization parameter $10^{-4}$. During sampling, 150 particles are initialized in the latent space from a Gaussian distribution rescaled to match the statistics of the latent encodings at $t_0$, and evolved over 800 iterations with step size $h=0.03$.

\paragraph{Baseline methods.} To ensure fair comparison as in Figure~\ref{fig:ac_comparison1}, all generative baselines operate in the same shared 8-dimensional latent space. Each baseline is trained for 150 epochs using the Adam optimizer and generates 100 samples per time point. Unlike KSWGD, which learns dynamics from $t_0 \to t_1$ pairs and predicts via powers of $K$, each baseline model is trained separately on ground-truth samples at each time point.
\begin{itemize}
    \item \textbf{Diffusion Model (DDPM):} MLP-based noise predictor with hidden dimension 256 and sinusoidal time embedding of dimension 64. Linear beta schedule from $10^{-4}$ to $0.02$ over 200 diffusion steps.
    \item \textbf{VAE:} Second-level encoder-decoder mapping 8-dimensional latent codes to a 4-dimensional VAE latent space. Two hidden layers with dimension 128 and KL weight $\beta = 0.1$.
    \item \textbf{Normalizing Flows (RealNVP):} 4 affine coupling layers with alternating binary masks, each parameterized by MLPs with hidden dimension 128.
    \item \textbf{GAN (WGAN-GP):} Generator maps 16-dimensional noise to 8-dimensional latent codes. 5 critic updates per generator update with gradient penalty $\lambda = 10.0$.
\end{itemize}

\paragraph{Additional experiments.} We also present results with interface width parameter $\epsilon=0.02$ in Figure~\ref{fig:ac_21} and Figure~\ref{fig:ac_comparison1}. Smaller $\epsilon$ creates sharper phase interfaces approaching the sharp-interface limit, requiring finer spatial resolution and smaller timesteps for numerical stability.

\begin{figure}[!htbp]
    \centering

    \begin{subfigure}{0.95\linewidth}
        \centering
        \includegraphics[width=\linewidth]{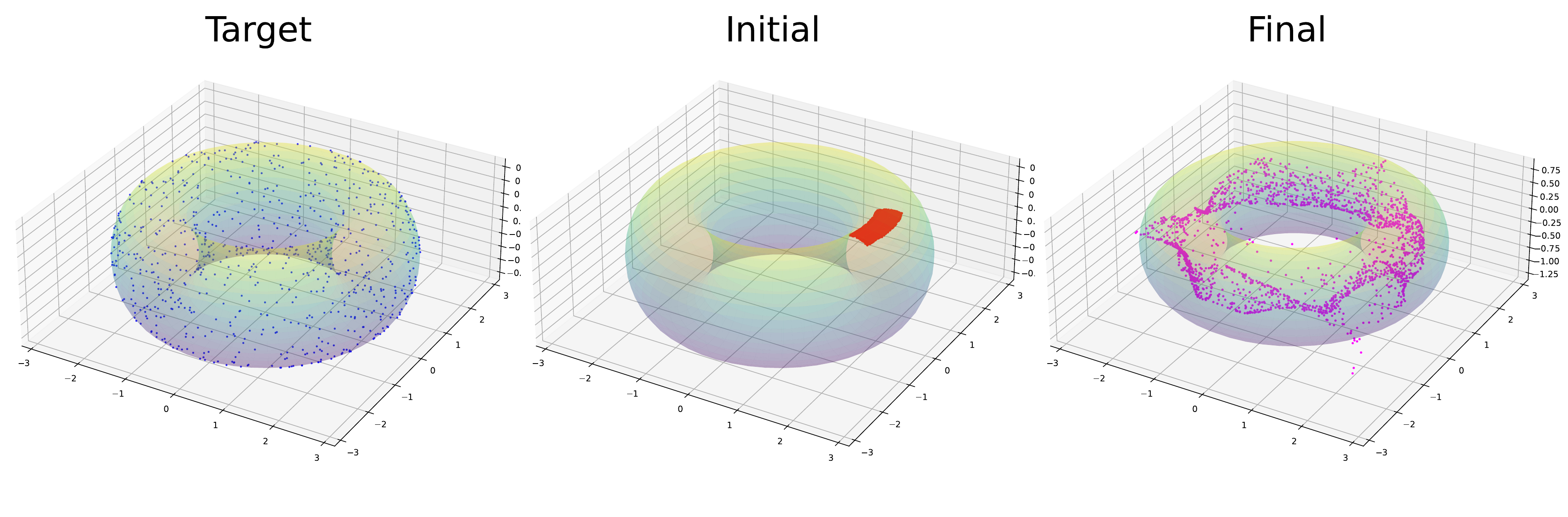}
        \caption{Three-dimensional visualization of particle evolution on the torus $\mathbb{T}^2$ with major radius $R=2.0$ and minor radius $r=0.8$. \textbf{(Left)} Target distribution. \textbf{(Center)} Initial configuration. \textbf{(Right)} Final configuration after DMPS iterations.}
    \end{subfigure}

    \vspace{0.3cm}

    \begin{subfigure}{0.75\linewidth}
        \centering
        \includegraphics[width=\linewidth]{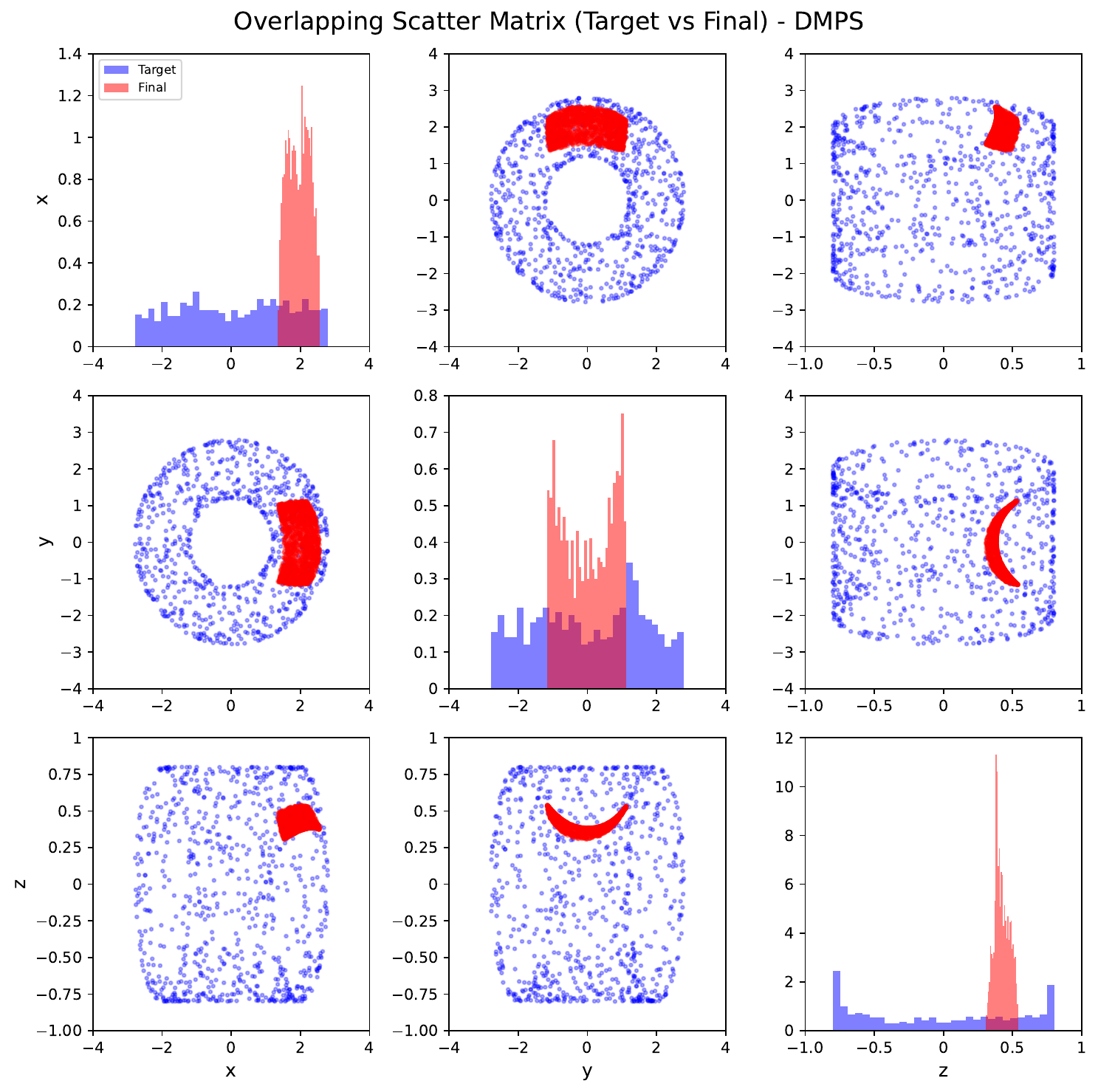}
        \caption{Scatter matrix comparing target (blue) and final particle distribution (red).}
    \end{subfigure}

    \caption{DMPS: Particle distribution convergence on torus $\mathbb{T}^2$ shown through 3D evolution and pairwise statistical comparison.}
    \label{fig:torus_dmps}
\end{figure}

\begin{figure}[!htbp]
    \centering
    \resizebox{0.75\linewidth}{!}{%
    \begin{minipage}{\linewidth}
        \centering

        \begin{subfigure}{0.48\linewidth}
            \centering
            \includegraphics[width=\linewidth]{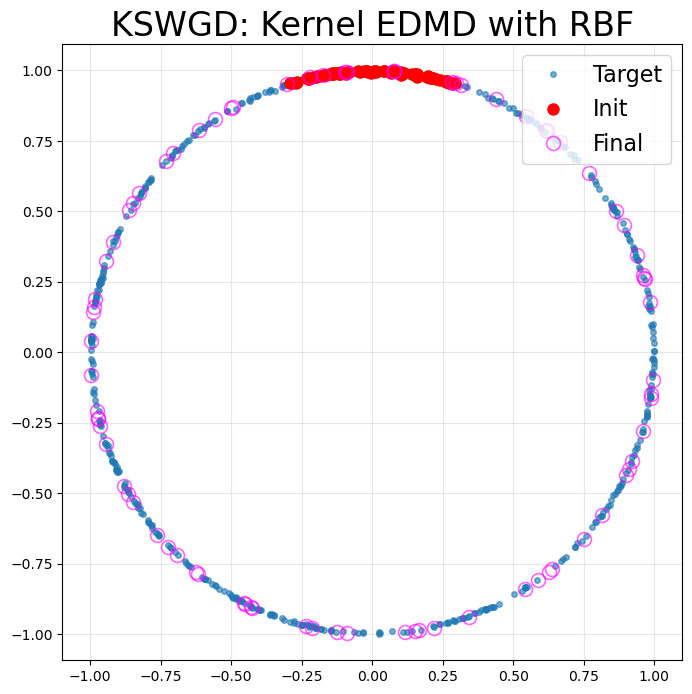}
            \caption{Kernel: RBF}
            \label{fig:sphere_kswgd_rbf}
        \end{subfigure}
        %
        \begin{subfigure}{0.48\linewidth}
            \centering
            \includegraphics[width=\linewidth]{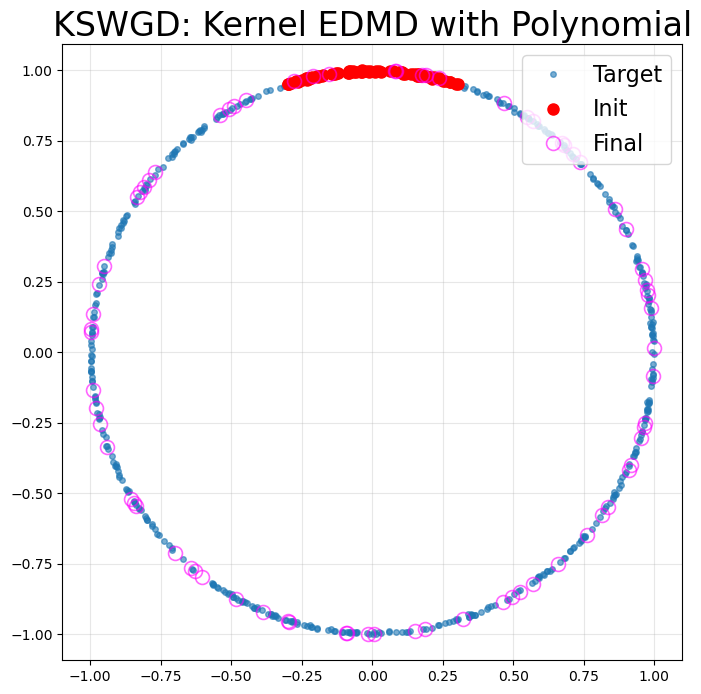}
            \caption{Kernel: Polynomial}
            \label{fig:sphere_kswgd_polynomial}
        \end{subfigure}
    \end{minipage}
    }
    \caption{1-Sphere $S^1$ example using KSWGD with Kernel-EDMD.}
    \label{fig:sphere_kswgd}
\end{figure}

\begin{figure}[!htbp]
    \centering
    \resizebox{0.75\linewidth}{!}{%
    \begin{minipage}{\linewidth}
        \centering

        \begin{subfigure}{0.48\linewidth}
            \centering
            \includegraphics[width=\linewidth]{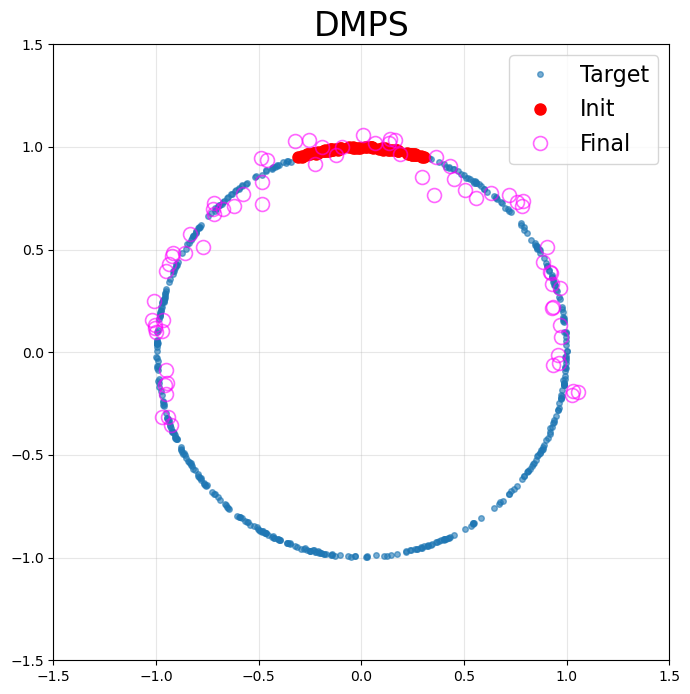}
        \end{subfigure}
        %
        \begin{subfigure}{0.48\linewidth}
            \centering
            \includegraphics[width=\linewidth]{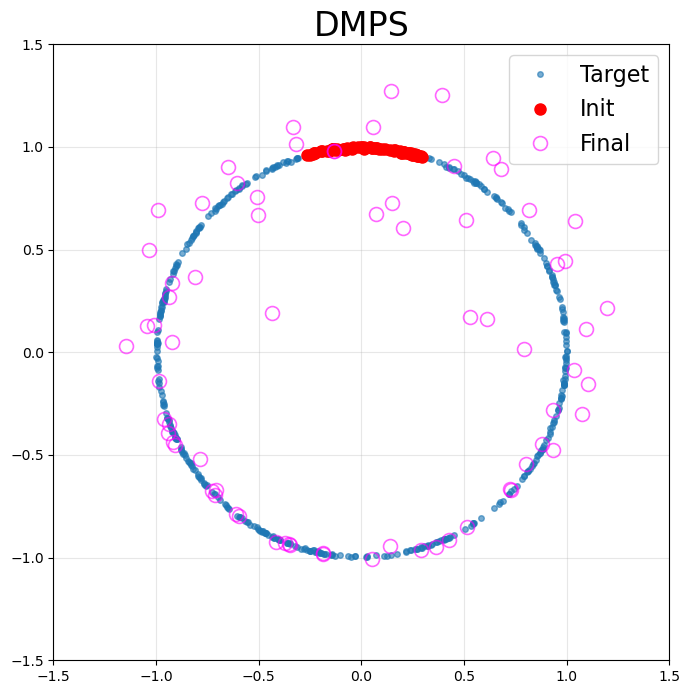}
        \end{subfigure}

    \end{minipage}
    }
    \caption{1-Sphere $S^1$ example using DMPS with different step size $h$.}
    \label{fig:sphere_dmps}
\end{figure}

\begin{figure}[!htbp]
    \centering
    \resizebox{0.85\linewidth}{!}{%
    \begin{minipage}{\linewidth}
        \centering

        \begin{subfigure}{0.95\linewidth}
            \centering
            \includegraphics[width=\linewidth]{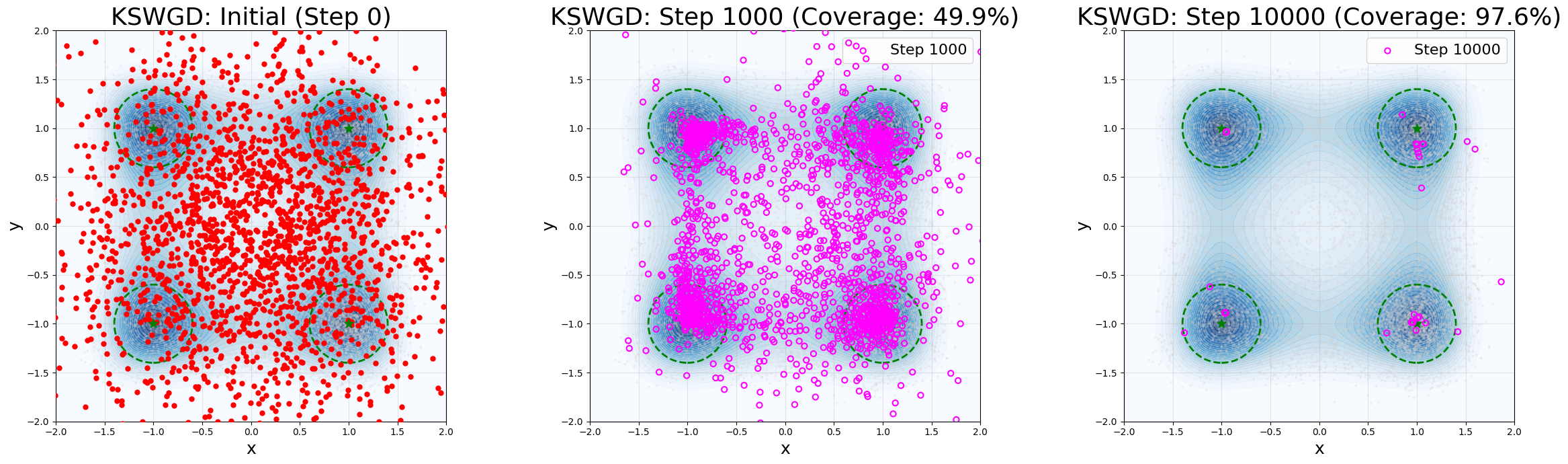}
        \end{subfigure}
        %
        \begin{subfigure}{0.95\linewidth}
            \centering
            \includegraphics[width=\linewidth]{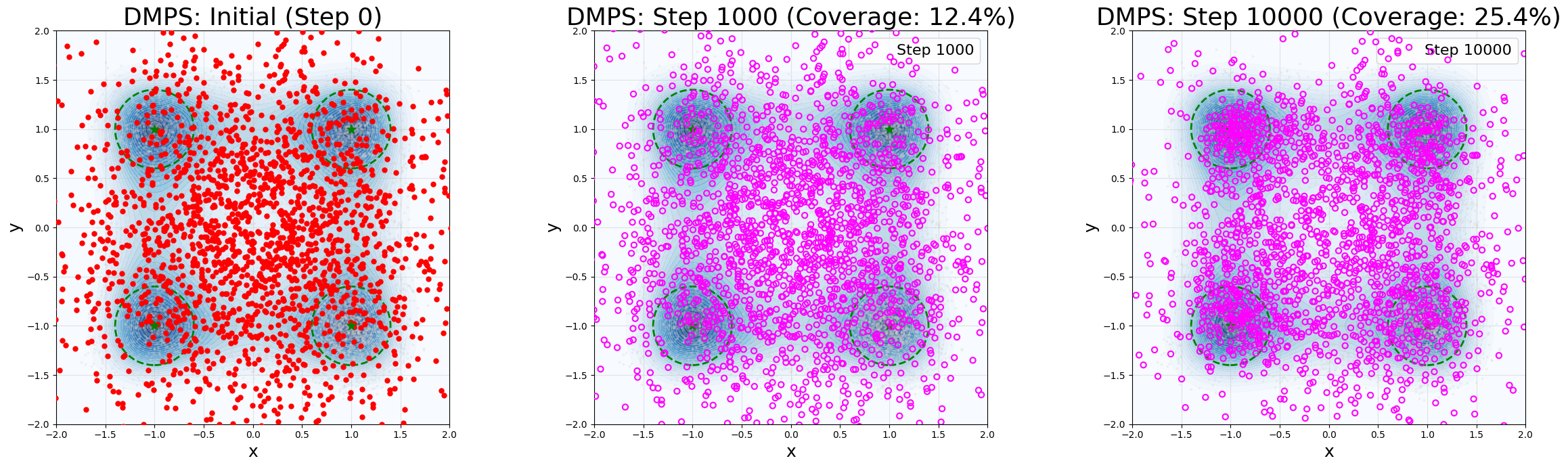}
        \end{subfigure}
    \end{minipage}
    }
    \caption{Quadruple potential well example using KSWGD and DMPS.}
    \label{fig:quadruple_kswgd_dmps}
\end{figure}
\begin{figure}[!htbp]
    \centering
    \resizebox{0.85\linewidth}{!}{%
    \begin{minipage}{\linewidth}
        \centering

        \begin{subfigure}{0.49\linewidth}
            \centering
            \includegraphics[width=\linewidth]{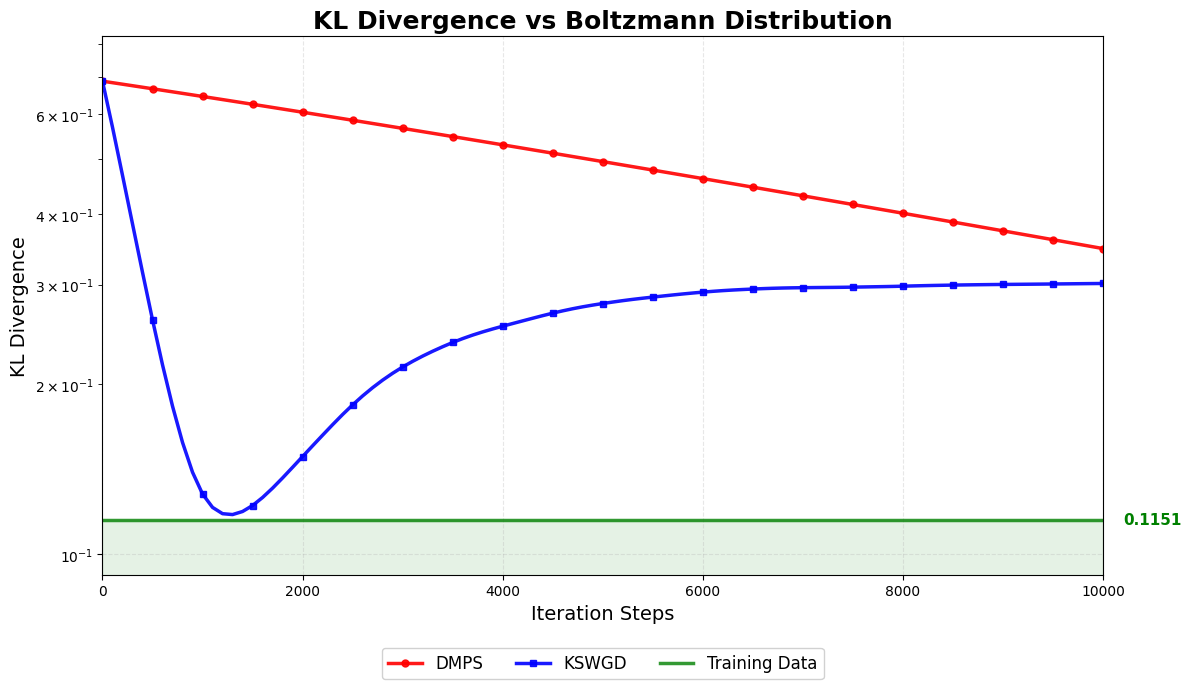}
            \caption{}
            \label{fig:kl2}
        \end{subfigure}
        %
        \begin{subfigure}{0.49\linewidth}
            \centering
            \includegraphics[width=\linewidth]{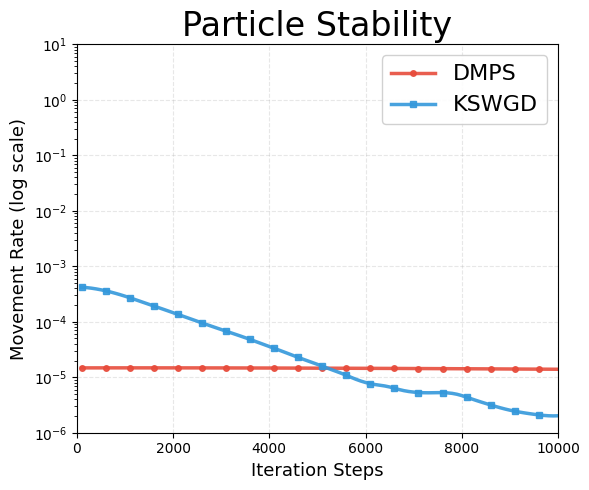}
            \caption{}
        \end{subfigure}
    \end{minipage}
    }
    \caption{KL divergence of DMPS, KSWGD and KDE of training data.}
    \label{fig:kl}
\end{figure}
\begin{figure}[!htbp]
    \centering
    \resizebox{0.85\linewidth}{!}{%
    \begin{minipage}{\linewidth}
        \centering

        \begin{subfigure}{0.95\linewidth}
            \centering
            \includegraphics[width=\linewidth]{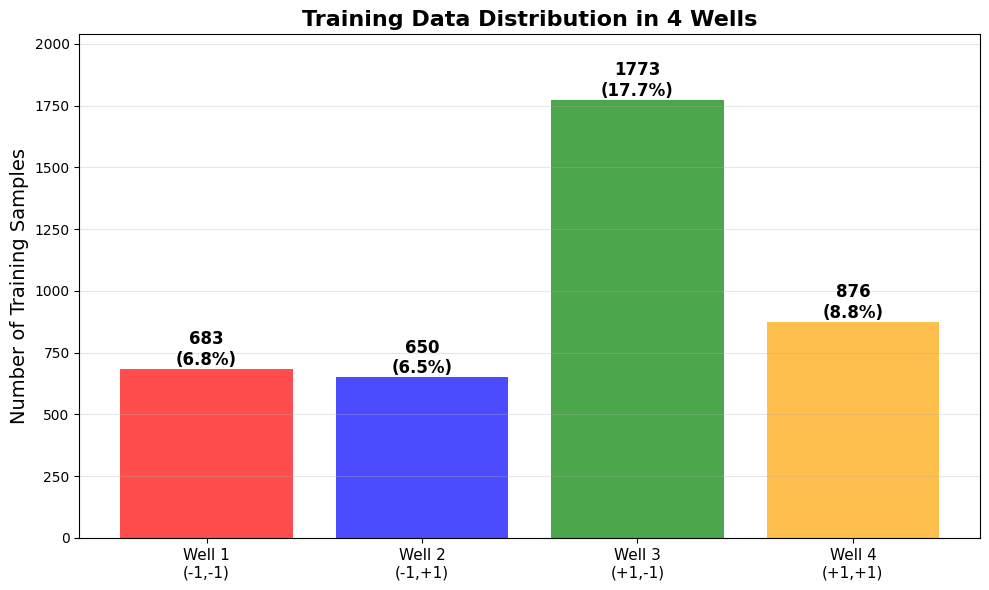}
        \end{subfigure}
        %
        \begin{subfigure}{0.95\linewidth}
            \centering
            \includegraphics[width=\linewidth]{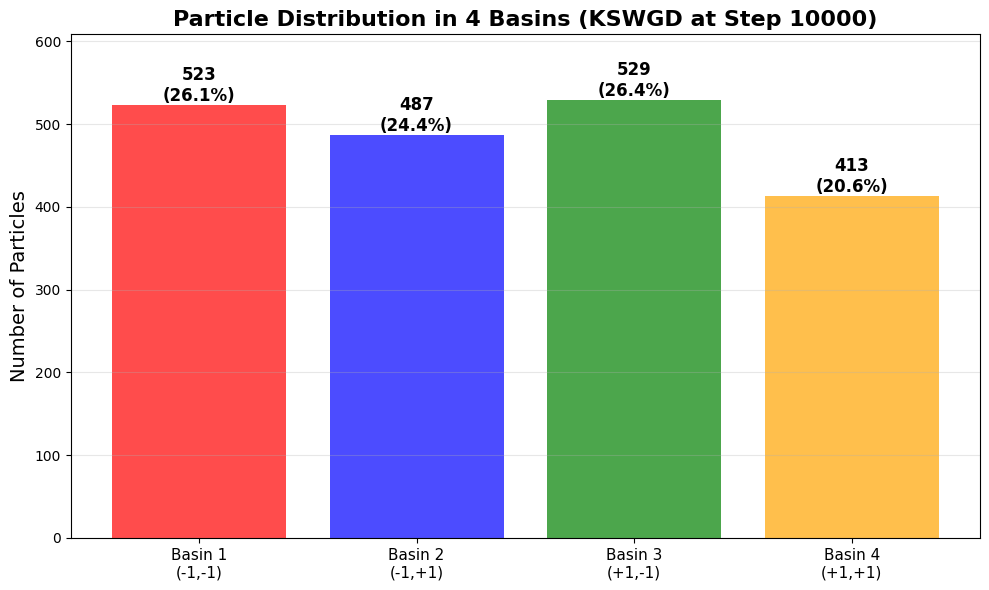}
        \end{subfigure}
    \end{minipage}
    }
    \caption{The distribution of the training data and after 10000 steps by KSWGD at each well.}
    \label{fig:distribution_4_wells}
\end{figure}

\begin{figure}[!htbp]
    \centering
    \resizebox{0.85\linewidth}{!}{%
    \begin{minipage}{\linewidth}
        \centering

        \begin{subfigure}{0.48\linewidth}
            \centering
            \includegraphics[width=\linewidth]{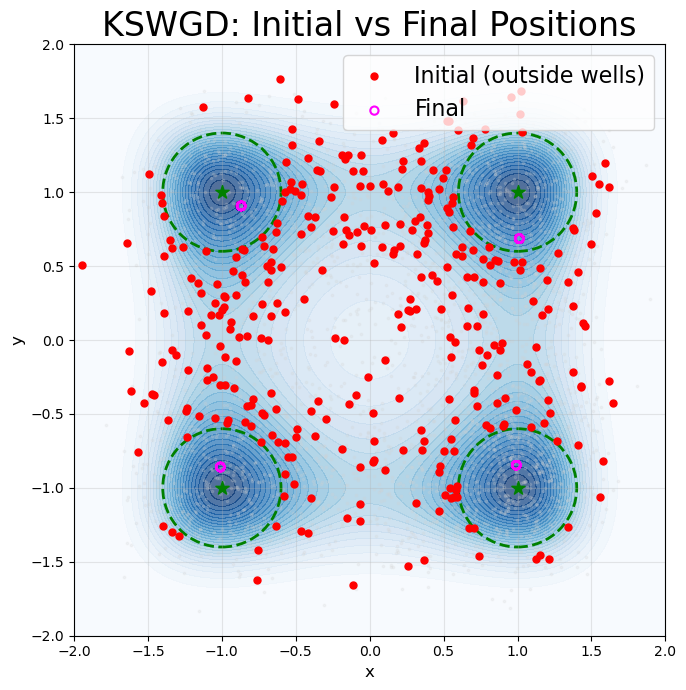}
        \end{subfigure}
        %
        \begin{subfigure}{0.48\linewidth}
            \centering
            \includegraphics[width=\linewidth]{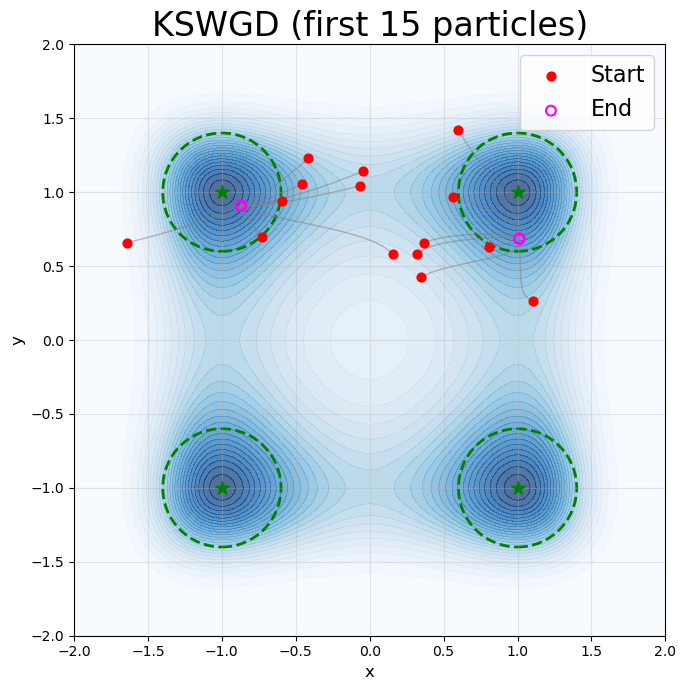}
        \end{subfigure}
        %
        \begin{subfigure}{0.48\linewidth}
            \centering
            \includegraphics[width=\linewidth]{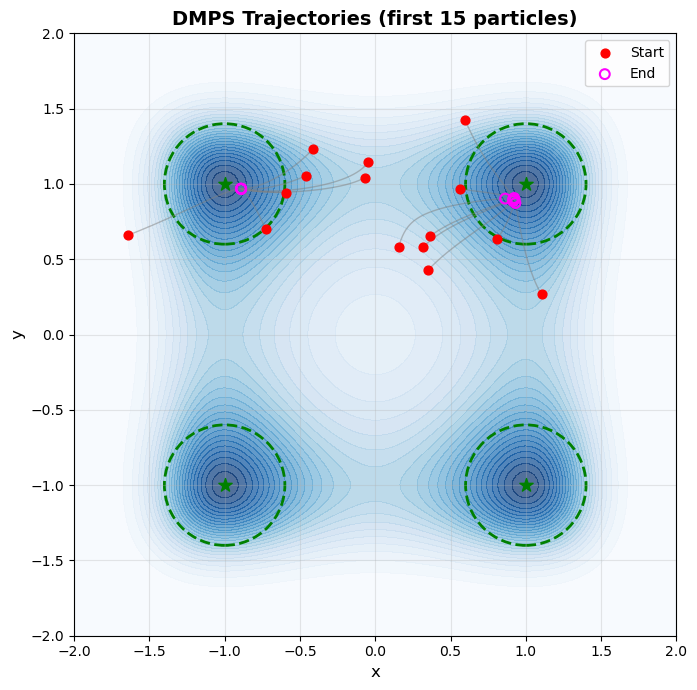}
        \end{subfigure}
        %
        \begin{subfigure}{0.48\linewidth}
            \centering
            \includegraphics[width=\linewidth]{images/quadruple_well/dmps_1000_time_series2.png}
        \end{subfigure}
        %
        \begin{subfigure}{0.48\linewidth}
            \centering
            \includegraphics[width=\linewidth]{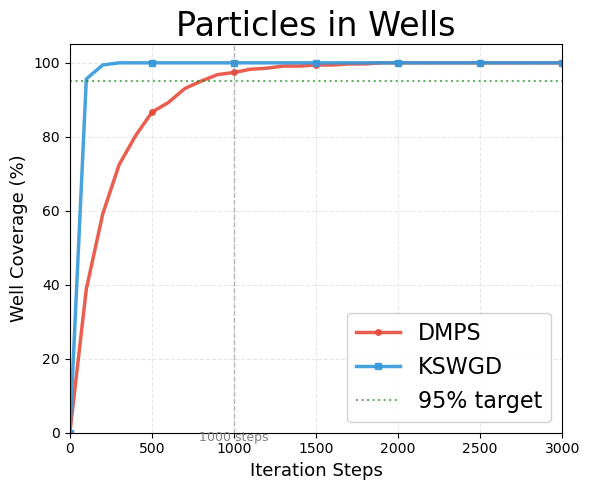}
        \end{subfigure}
        %
        \begin{subfigure}{0.48\linewidth}
            \centering
            \includegraphics[width=\linewidth]{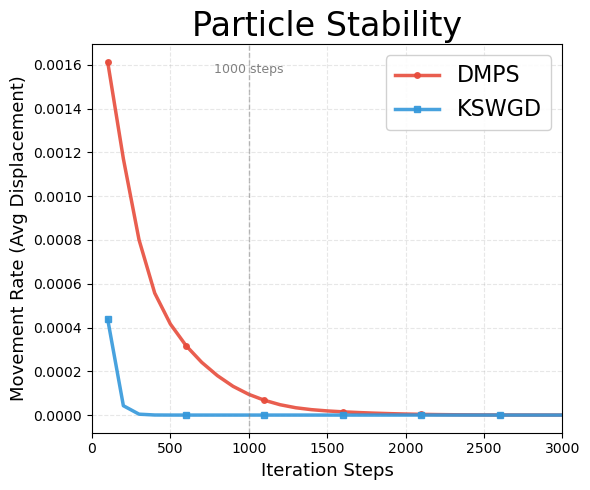}
        \end{subfigure}
    \end{minipage}
    }
    \caption{Quadruple potential well example using KSWGD and DMPS with smaller kernel bandwidth value.}
    \label{fig:quadruple_kswgd2}
\end{figure}

\begin{figure}[!htbp]
    \centering
    \includegraphics[width=0.75\linewidth]{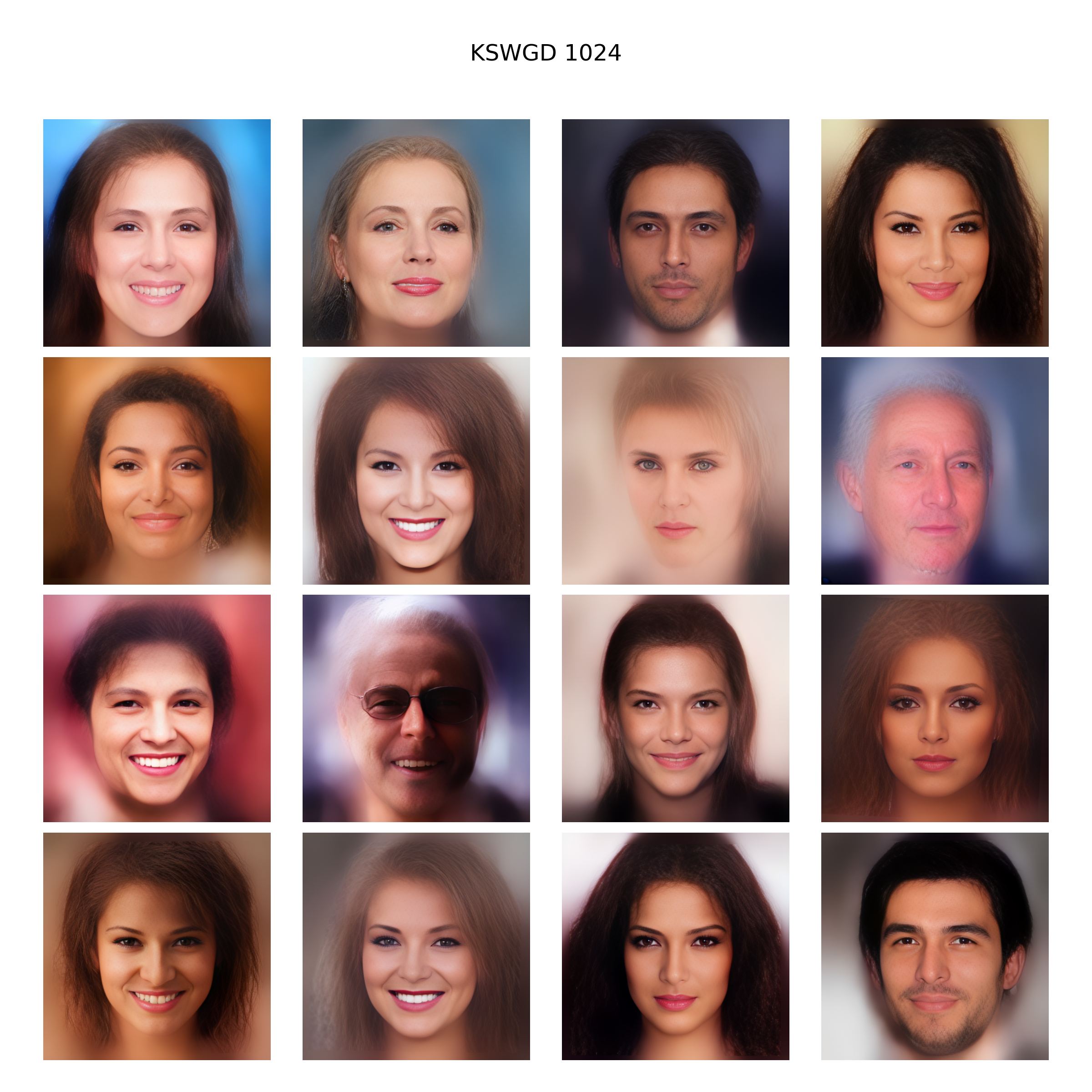}

    \vspace{0.3cm}

    \begin{center}
    \begin{small}
    \begin{tabular}{lcc}
      \toprule
      Data set & LDM-200 & KSWGD \\
      \midrule
      CelebA-256  & 108 & 161 \\
      CelebA-1024 & 117 & 137 \\
      \bottomrule
    \end{tabular}
    \end{small}
    \end{center} 
    \caption{CelebA-HQ and corresponding FID scores.}
    \label{fig:celeba_with_table}
\end{figure}

\begin{figure}[!htbp]
    \centering    \includegraphics[width=0.99\linewidth]{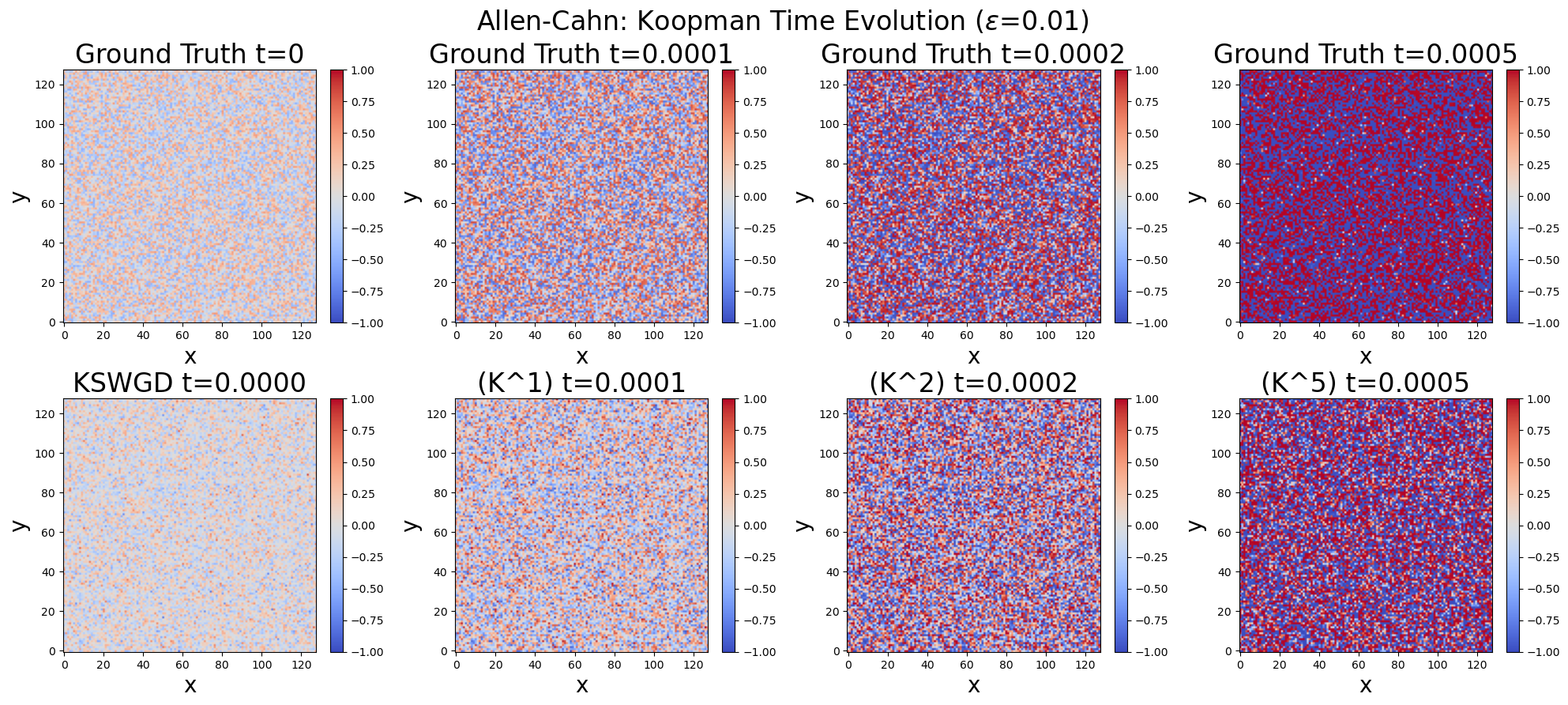}    
    \caption{Solution of Allen-Cahn equation example using KSWGD with EDMD (polynomial dictionary) and $\epsilon = 0.01$.}
    \label{fig:ac_14}
\end{figure}

\begin{figure*}[!htbp]
    \centering
    \includegraphics[width=0.99\linewidth]{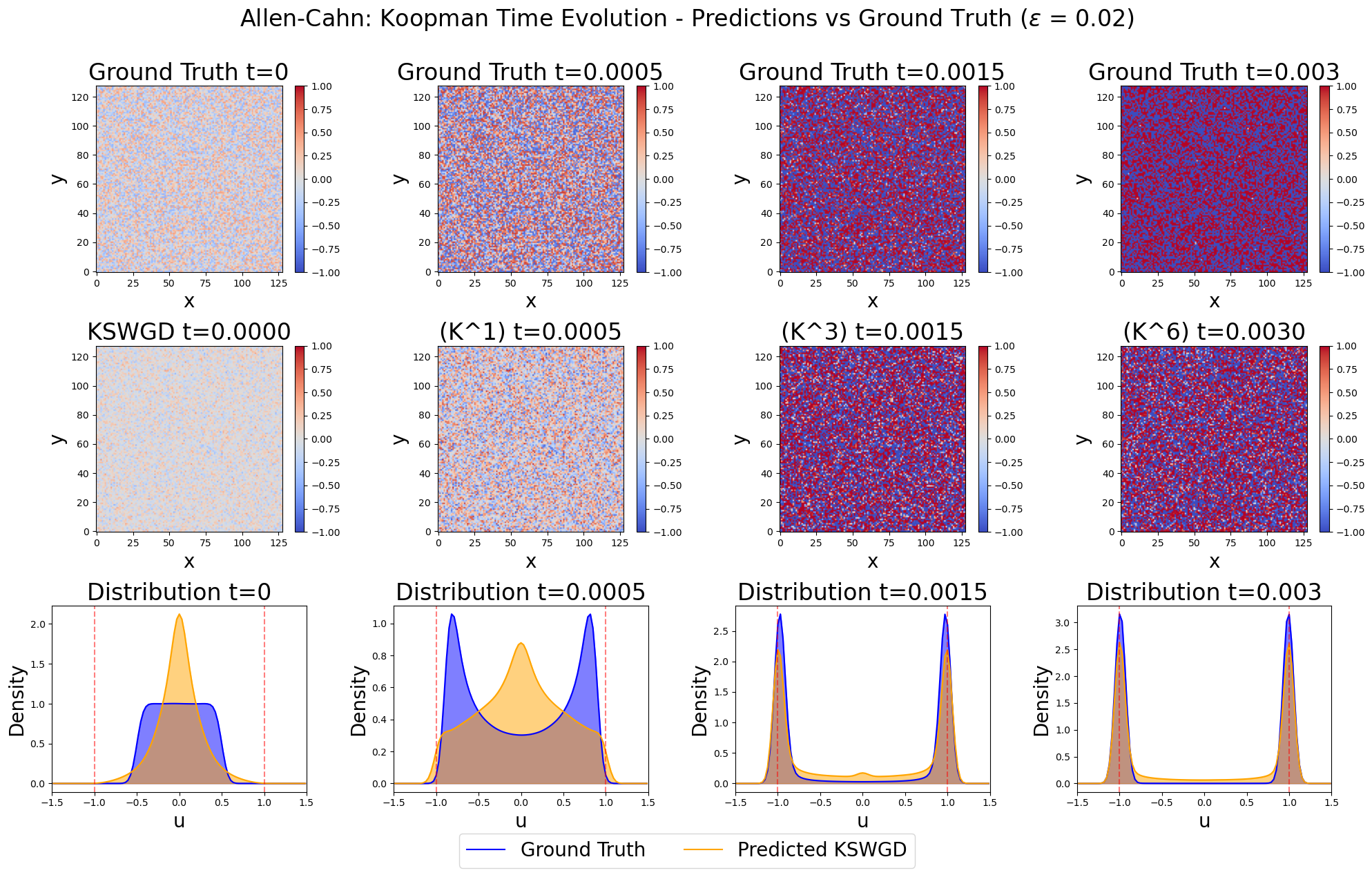}
    \caption{Solution of Allen-Cahn equation example using KSWGD with EDMD (polynomial dictionary) and $\epsilon = 0.02$.}
    \label{fig:ac_21}
\end{figure*}

\begin{figure*}[!htbp]
    \centering
    \resizebox{0.99\linewidth}{!}{%
    \begin{minipage}{\linewidth}
        \centering

        \begin{subfigure}{0.99\linewidth}
            \centering
            \includegraphics[width=0.99\linewidth]{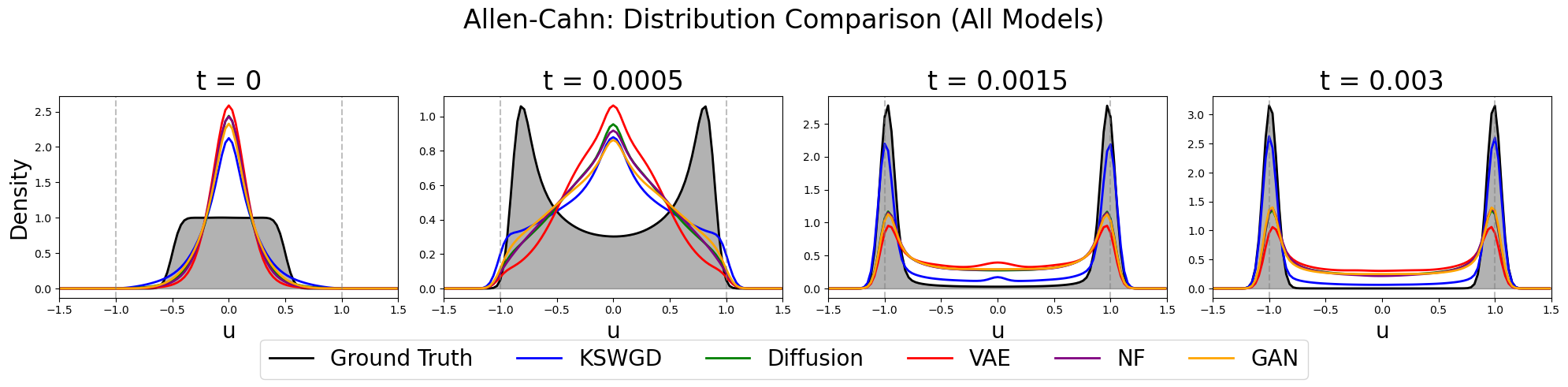}
        \end{subfigure}

    \end{minipage}
    }
    \caption{Comparison: KSWGD vs DM, VAE, Normalizing Flows and GAN, $\epsilon = 0.02$.}
    \label{fig:ac_comparison2}
\end{figure*}

\newpage
\section{Theory}\label{sec:appendixB}
\subsection{Proof of Proposition~\ref{prop:ideal}}\label{app:proof_prop}
\begin{proof}
Let $\rho_t = \frac{\mathrm{d}\mu_t}{\mathrm{d}\pi}$ and $f_t = \rho_t - 1$. 
Recall the KL divergence $\mathrm{KL}(\mu_t\|\pi)=\int \rho_t \log\rho_t\,\mathrm{d}\pi$. By differentiation under the invariant measure $\pi$, see \citep{chewi2023log,chewi2020svgd} for more details), we obtain
\begin{align*}
\frac{\mathrm{d}}{\mathrm{d}t} \mathrm{KL}(\mu_t\|\pi)
&= \int \partial_t (\rho_t \log\rho_t) \, \mathrm{d}\pi  \\
&= \int \left[(\partial_t\rho_t)\log\rho_t + \rho_t\partial_t(\log\rho_t)\right] \, \mathrm{d}\pi \\
&= \int \mathrm{div}\!\bigl(\rho_t\,\nabla \mathcal K_r f_t\bigr)\,\log\rho_t \, \mathrm{d}\pi + \int \partial_t \rho_t \, \mathrm{d}\pi \\
&= - \int \big\langle \nabla \log\rho_t,\, \nabla \mathcal K_r f_t \big\rangle \,\rho_t \, \mathrm{d}\pi \\
&= - \big\langle \nabla \log\rho_t,\, \nabla \mathcal K_r f_t \big\rangle_{\mu_t},
\end{align*}
where $\int \partial_t \rho_t \, \mathrm{d}\pi = \frac{\mathrm{d}}{\mathrm{d}t}\int \rho_t \, \mathrm{d}\pi = 0$. Then, using $\rho_t=1+f_t$ we have the identity
\[
\big\langle \nabla \log\rho_t,\, \nabla g\big\rangle_{\mu_t}
= \int \Big\langle \frac{\nabla \rho_t}{\rho_t},\, \nabla g \Big\rangle \rho_t \, \mathrm{d}\pi
= \int \langle \nabla \rho_t,\, \nabla g \rangle \, \mathrm{d}\pi
= \int \langle \nabla f_t,\, \nabla g \rangle \, \mathrm{d}\pi
= \big\langle \nabla f_t,\, \nabla g \big\rangle_\pi .
\]

Applying this with $g=\mathcal K_r f_t$ gives
\begin{equation}\label{eq:DKL_main}
\frac{\mathrm{d}}{\mathrm{d}t} \mathrm{KL}(\mu_t\|\pi)
= - \big\langle \nabla f_t,\, \nabla \mathcal K_r f_t \big\rangle_\pi
= -\langle f_t, \mathcal L \mathcal K_r f_t \rangle_\pi
= -\langle f_t, \Pi_r  f_t \rangle_\pi,
\end{equation}
where we used the Dirichlet form identity $\langle \nabla f, \nabla g\rangle_\pi = \langle f, \mathcal L g\rangle_\pi$ and the operator relation~\eqref{eq:main_proj}.

Expanding the right-hand side of~\eqref{eq:DKL_main}, we have
\begin{equation}\label{eq:DKL_bound_step}
\frac{\mathrm{d}}{\mathrm{d}t} \mathrm{KL}(\mu_t\|\pi)
= - \langle f_t, \Pi_r f_t \rangle_\pi = -\|\Pi_r f_t\|_{L^2_\pi}^2,
\end{equation}
since $\Pi_r$ is the orthogonal projection, which is self-adjoint and idempotent and thus implies $\langle f_t, \Pi_r f_t \rangle_\pi = \|\Pi_r f_t\|_{L^2_\pi}^2$. 

If, in addition, $\|(I-\Pi_r)f_t\|_{L^2_\pi} \le \eta_{r}$, then
$$\|f_t\|_{L^2_\pi}^2 = \|\Pi_r f_t\|_{L^2_\pi}^2 \;+\; \|(I-\Pi_r)f_t\|_{L^2_\pi}^2
\le \|\Pi_r f_t\|_{L^2_\pi}^2 + \eta_{r}^2.$$
Plugging into~\eqref{eq:DKL_bound_step}, we have 
\begin{equation*}
\frac{\mathrm{d}}{\mathrm{d}t} \mathrm{KL}(\mu_t\|\pi)
\le -\, \|f_t\|_{L^2_\pi}^2 + \,\eta_{r}^2 = -\, \chi^2(\mu_t\|\pi) +\, \eta_{r}^2.
\end{equation*}
where we used $\chi^2(\mu_t\|\pi) = \|f_t\|_{L^2_\pi}^2$.

Applying the inequality $\chi^2(\mu_t\|\pi) \ge \mathrm{KL}(\mu_t\|\pi)$ to the above inequality, we obtain
\[
\frac{\mathrm{d}}{\mathrm{d}t} \mathrm{KL}(\mu_t\|\pi)
\le -\,\mathrm{KL}(\mu_t\|\pi) +\ \eta_{r}^2.
\]
Then, by Gr\"onwall's inequality we have
\[
\mathrm{KL}(\mu_t\|\pi)
\le \mathrm{KL}(\mu_0\|\pi)\,
   e^{-t}
   +\,\eta_{r}^2\,(1-e^{-t}),
\]
which is exactly~\eqref{eq:KL_bound}.
\end{proof}
\begin{remark}
    Here we assumed sufficient decay or integrability at $\infty$ on the domain $\mathbb{R}^d$ so that the “boundary at infinity” term is zero when applying the integration by part.
\end{remark}

\subsection{Proof of Theorem~\ref{thm:data}}\label{app:proof_thm}
\begin{proof}
Following the same derivation as Proposition~\ref{prop:ideal} up to the dissipation identity in \eqref{eq:DKL_main}, but now using the perturbed relation $\mathcal{L}\widehat{\mathcal{K}}_r = \Pi_r + \delta_r$ as in \eqref{eq:main_proj_perturbed} instead:
\begin{equation}
\frac{\mathrm{d}}{\mathrm{d}t}\mathrm{KL}(\mu_t\|\pi)
= - \langle f_t, \mathcal{L}\widehat{\mathcal{K}}_r f_t\rangle_{\pi}
= -\langle f_t, \Pi_r f_t\rangle_{\pi} - \langle f_t,\delta_r(f_t)\rangle_{\pi}.
\end{equation}

Since $\Pi_r$ is self-adjoint and idempotent, $\langle f_t, \Pi_r f_t\rangle_{\pi} = \|\Pi_r f_t\|_{L^2_\pi}^2$. By Assumption~\ref{ass:approx_error}:
\begin{equation}
\langle f_t,\delta_r(f_t)\rangle_{\pi} \leq |\langle f_t,\delta_r(f_t)\rangle_{\pi}| \leq \varepsilon_r\|f_t\|_{L^2_\pi}^2.
\end{equation}

Therefore,
\begin{equation}
\frac{\mathrm{d}}{\mathrm{d}t}\mathrm{KL}(\mu_t\|\pi)
\leq -\|\Pi_r f_t\|_{L^2_\pi}^2 + \varepsilon_r\|f_t\|_{L^2_\pi}^2.
\end{equation}

Using the Pythagorean decomposition $\|\Pi_r f_t\|^2 = \|f_t\|^2 - \|(I-\Pi_r)f_t\|^2$ and Assumption~\ref{ass:spec_tail_bd}:
\begin{equation}\label{eq:DKL_tail}
\frac{\mathrm{d}}{\mathrm{d}t}\mathrm{KL}(\mu_t\|\pi)
\leq -(1-\varepsilon_r)\|f_t\|_{L^2_\pi}^2 + \eta_r^2
= -(1-\varepsilon_r)\chi^2(\mu_t\|\pi) + \eta_r^2.
\end{equation}

Then, applying $\chi^2(\mu_t\|\pi) \geq \mathrm{KL}(\mu_t\|\pi)$ and Gr\"{o}nwall's inequality gives \eqref{eq:data-bound}.
\end{proof}

\subsection{Proof of Corollary~\ref{cor:discrete_convergence}}\label{app:proof_cor}
\begin{proof}
For any velocity field $v_t$, the one-step change in KL satisfies \citep{ambrosio2008gradient,jordan1998variational}
\[
\mathrm{KL}(\mu_{t+1}\|\pi) - \mathrm{KL}(\mu_t\|\pi) 
\le -h \langle \nabla \log(\mathrm{d}\mu_t/\mathrm{d}\pi), v_t \rangle_{L^2_{\mu_t}} + O(h^2).
\]
From the continuous-time dissipation inequality \eqref{eq:DKL_tail} (or its discrete analogue established in Theorem~\ref{thm:data}), the inner product term is bounded by
\[
-\langle \nabla \log (\mathrm{d}\mu_t/\mathrm{d}\pi), v_t \rangle_{L^2_{\mu_t}} 
\le -\alpha \chi^2(\mu_t\|\pi) + \beta.
\]
Thus,
\[
\mathrm{KL}(\mu_{t+1}\|\pi) 
\le \mathrm{KL}(\mu_t\|\pi) - h\alpha \chi^2(\mu_t\|\pi) + h\beta + O(h^2).
\]
Since $\chi^2(\mu_t\|\pi) \ge \mathrm{KL}(\mu_t\|\pi)$ holds locally whenever $\mu_t$ is sufficiently close to $\pi$, we obtain the desired linear recursion
\[
\mathrm{KL}(\mu_{t+1}\|\pi) 
\le (1 - \alpha h) \mathrm{KL}(\mu_t\|\pi) + h\beta + O(h^2).
\]
Iterating this inequality $T$ times and summing the resulting geometric series (with ratio $\rho_h \coloneqq 1-\alpha h > 0$ for $h < 1/\alpha$) gives
\[
\mathrm{KL}(\mu_T\|\pi) 
\le \rho_h^T \mathrm{KL}(\mu_0\|\pi) + h\beta \sum_{k=0}^{T-1} \rho_h^k + O(h)
\le (1-\alpha h)^T \mathrm{KL}(\mu_0\|\pi) + \frac{\beta}{\alpha} + O(h),
\]
where the last step uses $\sum_{k=0}^{T-1} \rho_h^k \le 1/(\alpha h)$.
\end{proof}



\end{document}